\newif\ifshort
\newif\ifprivate
\newif\iffinal
\DeclareMathOperator*{\argmax}{argmax}
\DeclareMathOperator*{\IG}{IG}
\DeclareMathOperator*{\KL}{KL}
\DeclareMathOperator*{\lequal}{\leq}
\DeclareMathOperator*{\lthan}{<}
\DeclareMathOperator*{\equal}{=}
\DeclareMathOperator*{\toas}{\to}
\newcommand*{\Scale}[2][4]{\scalebox{#1}{$#2$}}%
\DeclareMathOperator{\E}{\mathbb{E}}
\DeclareMathOperator{\p}{P}
\DeclareMathOperator*{\vb}{\!\Bigm\vert\!}
\newcommand{\tagaligneq}{\refstepcounter{equation}\tag{\theequation}}
\declaretheorem{theorem}
\declaretheorem[sharenumber=theorem]{definition}
\def\phead{\textbf}\else\def\phead#1{}\fi
\title{A Strongly Asymptotically Optimal Agent in General Environments}
\author{
Michael K.\ Cohen$^1$\footnote{Contact Author}\and
Elliot Catt$^1$\And
Marcus Hutter$^1$\\
\affiliations
$^1$Australian National University\\
\emails
\{michael.cohen, elliot.carpentercatt, marcus.hutter\}@anu.edu.au
}
\begin{document}

\sloppy

\maketitle

\begin{abstract}
Reinforcement Learning agents are expected to eventually perform well. Typically, this takes the form of a guarantee about the asymptotic behavior of an algorithm given some assumptions about the environment. We present an algorithm for a policy whose value approaches the optimal value with probability 1 in all computable probabilistic environments, provided the agent has a bounded horizon. This is known as strong asymptotic optimality, and it was previously unknown whether it was possible for a policy to be strongly asymptotically optimal in the class of all computable probabilistic environments. Our agent, Inquisitive Reinforcement Learner (Inq), is more likely to explore the more it expects an exploratory action to reduce its uncertainty about which environment it is in, hence the term inquisitive. Exploring inquisitively is a strategy that can be applied generally; for more manageable environment classes, inquisitiveness is tractable. We conducted experiments in ``grid-worlds'' to compare the Inquisitive Reinforcement Learner to other weakly asymptotically optimal agents.

\end{abstract}

%\begin{keywords}
%Reinforcement Learning, Inquisitiveness, Exploration, Bayesian Inference, Artificial General Intelligence
%\end{keywords}

\section{Introduction}
\begin{quote}
    \textit{``Efforts to solve [an instance of the exploration-exploitation problem] so sapped the energies and minds of Allied analysts that the suggestion was made that the problem be dropped over Germany, as the ultimate instrument of intellectual sabotage.''} --Peter Whittle \cite{whittle_1979}
\end{quote}
 The Allied analysts were considering the simplest possible problem in which there is a trade-off to be made between exploiting, taking the apparently best option, and exploring, choosing a different option to learn more. We tackle what we consider the most difficult instance of the exploration-exploitation trade-off problem: when the environment could be any computable probability distribution, not just a multi-armed bandit, how can one achieve optimal performance in the limit? 

Our work is within the Reinforcement Learning (RL) paradigm: an agent selects an action, and the environment responds with an observation and a reward. The interaction may end, or it may continue forever. Each interaction cycle is called a timestep. The agent has a discount function that weights its relative concern for the reward it achieves at various future timesteps. The agent’s job is to select actions that maximize the total expected discounted reward it achieves in its lifetime. The ``value” of an agent’s policy at a certain point in time is the expected total discounted reward it achieves after that time if it follows that policy. One formal specification of the exploration-exploitation problem is: what policy can an agent follow so that the policy’s value approaches the value of the optimal informed policy with probability 1, even when the agent doesn’t start out knowing the true dynamics of its environment?

Most work in RL makes strong assumptions about the environment—that the environment is Markov, for instance. Impressive recent development in the field of reinforcement learning often makes use of the Markov assumption, including Deep Q Networks \cite{mnih2015human}, A3C \cite{mnih2016asynchronous}, Rainbow \cite{hessel_2018}, and AlphaZero \cite{silver2017mastering}. Another example of making strong assumptions in RL comes from some model-based algorithms that implicitly assume that the environment is representable by, for example, a fixed-size neural network, or whatever construct is used to model the environment. We do not make any such assumptions.

Many recent developments in RL are largely about tractably learning to exploit; how to explore intelligently is a separate problem. We address the latter problem. Our approach, inquisitiveness, is based on Orseau et al.’s \shortcite{Hutter:13ksaprob} Knowledge Seeking Agent for Stochastic Environments, which selects the actions that best inform the agent about what environment it is in. Our Inquisitive Reinforcement Learner (Inq) explores like a knowledge seeking agent, and is more likely to explore when there is apparently (according to its current beliefs) more to be learned. Sometimes exploring well requires ``expeditions,” or many consecutive exploratory actions. Inq entertains expeditions of all lengths, although it follows the longer ones less often, and it doesn't resolutely commit in advance to seeing the expedition through.

This is a very human approach to information acquisition. When we spot an opportunity to learn something about our natural environment, we feel inquisitive. We get distracted. We are inclined to check it out, even if we don’t see directly in advance how this information might help us better achieve our goals. Moreover, if we can tell that the opportunity to learn something requires a longer term project, we may find ourselves less inquisitive.

For the class of computable environments (stochastic environments that follow a computable probability distribution), it was previously unknown whether any policy could achieve strong asymptotic optimality (convergence of the value to optimality with probability 1). Lattimore et al. \shortcite{Hutter:11asyoptag} showed that no deterministic policy could achieve this. The key advantage that stochastic policies have is that they can let the exploration probability go to $0$ while still exploring infinitely often. (For example, an agent that explores with probability $1/t$ at time $t$ still explores infinitely often).

There is a weaker notion of optimality--``weak asymptotic optimality''--for which positive results already exist; this condition requires that the average value over the agent's lifetime approach optimality. Lattimore et al. \shortcite{Hutter:11asyoptag} identified a weakly asymptotically optimal agent for \textit{deterministic} computable environments; the agent maintains a list of environments consistent with its observations, exploiting as if it is in the first such one, and exploring in bursts. A recent algorithm for a Thompson Sampling Bayesian agent was shown, with an elegant proof, to be weakly asymptotically optimal in all computable environments, but not strongly asymptotically optimal \cite{Hutter:16thompgrl}. \iffinal Our algorithm is inspired from Cohen et al.'s algorithm for an extremely myopic agent ``Boxed Myopic Artificial Intelligence''; a few lemmas in our proof of strong asymptotic optimality were first shown in that paper, but we repeat them here for completeness. \fi

Most work in RL regards (Partially Observable) Markov Decision Processes (PO)MDPs. However, environments that enter completely novel states infinitely often render (PO)MDP algorithms helpless. For example, an RL agent acting as a chatbot, optimizing a function, or proving mathematical theorems would struggle to model the environment as an MDP, and would likely require an exploration mechanism like ours. In the chatbot case, for instance, as a conversation with a person progresses, the person never returns to the same state.

If we formally compare Inq to existing algorithms in MDPs, we find that many achieve asymptotic optimality. Epsilon-greedy, upper confidence bound, and Thompson sampling exploration strategies suffice in MDPs. Our primary motivation is for the sorts of environments described above. To discriminate between exploratory approaches in \textit{ergodic} MDPs, one can formally bound regret, and we would like to do this for Inq in the future.

For comparison, some algorithms which use the MDP formalism also consider information-theoretic approaches to exploration, such as VIME \cite{houthooft2016curiosity}, the agent in \cite{still2009information}, and TEXPLORE-VANIR \cite{hester2012intrinsically}.

In Section 2, we formally describe the RL setup and present notation. In Section 3, we present the algorithm for Inq. In Section 4, we prove our main result: that Inq is strongly asymptotically optimal. In Section 5, we present experimental results comparing Inq to weakly asymptotically optimal agents. Finally, we discuss the relevance of this exploration regime to tractable algorithms. Appendix \ref{app:notation} collates notation and definitions for quick reference. Appendix \ref{app:proofs} contains the proofs of the lemmas.
% Both of these, along with the full paper, may be found at \url{https://tinyurl.com/SAO-anon}.

\section{Notation}

We follow the notation of Orseau, et al. \shortcite{Hutter:13ksaprob}. The reinforcement learning setup is as follows: $\mathcal{A}$ is a finite set of actions available to the agent; $\mathcal{O}$ is a finite set of observations it might observe, and $\mathcal{R} = [0, 1] \cap \mathbb{Q}$ is the set of possible rewards. The set of all possible interactions in a timestep is $\mathcal{H} := \mathcal{A} \times \mathcal{O} \times \mathcal{R}$. At every timestep, one element from this set occurs. A reinforcement learner's policy $\pi$ is a stochastic function which outputs an action given an interaction history, denoted by $\pi: \mathcal{H}^* \rightsquigarrow \mathcal{A}$. ($\mathcal{X}^* := \bigcup_{i=0}^\infty \mathcal{X}^i$ represents all finite strings from an alphabet $\mathcal{X}$). An environment is a stochastic function which outputs an observation and reward given an interaction history and an action: $\nu: \mathcal{H}^* \times \mathcal{A} \rightsquigarrow \mathcal{O} \times \mathcal{R}$. For a stochastic function $f: \mathcal{X} \to \mathcal{Y}$, $f(y|x)$ denotes the probability that $f$ outputs $y \in \mathcal{Y}$ when $x \in \mathcal{X}$ is input.

A policy and an environment induce a probability measure over $\mathcal{H}^\infty$, the set of all possible infinite histories: for $h \in \mathcal{H}^*$, $\p^\pi_\nu(h)$ denotes the probability that an infinite history begins with $h$ when actions are sampled from the policy $\pi$, and observations and rewards are sampled from the environment $\nu$. Formally, we define this inductively: $\p^\pi_\nu(\epsilon) \mapsto 1$, where $\epsilon$ is the empty history, and for $h \in \mathcal{H}^*$, $a \in \mathcal{A}$, $o \in \mathcal{O}$, $r \in \mathcal{R}$, we define $\p^\pi_\nu(haor) \mapsto \p^\pi_\nu(h)\pi(a|h)\nu(or|ha)$. In an infinite history $h_{1:\infty} \in \mathcal{H}^\infty$, $a_t$, $o_t$, and $r_t$ refer to the $t$th action, observation and reward, and $h_t$ refers to the $t$th timestep: $a_t o_t r_t$. $h_{<t}$ refers to the first $t-1$ timesteps, and $h_{t:k}$ refers to the string of timesteps $t$ through $k$ (inclusive). Strings of actions, observations, and rewards are notated similarly.

A Bayesian agent deems a class of environments a priori feasible. Its ``beliefs'' take the form of a probability distribution over which environment is the true one. We call this the agent's belief distribution. In our formulation, Inq considers any computable environment feasible, and starts with a prior belief distribution based on the environments' Kolmogorov complexities: that is, the length of the shortest program that computes the environment on some reference machine. However, all our results hold as long as the true environment is contained in the class of environments that are considered feasible, and as long as the prior belief distribution assigns nonzero probability to each environment in the class. We take $\mathcal{M}$ to be the class of all computable environments, and $w(\nu) := 2^{-K(\nu)(1+\varepsilon)}/\mathcal{N}$ to be the prior probability of the environment $\nu$, where $K$ is the Kolmogorov complexity, $\varepsilon > 0$, and $\mathcal{N}$ is a normalization constant. ($\varepsilon > 0$ ensures the prior has finite entropy, which facilitates analysis.) A smaller class with a different prior probability could easily be substituted for $\mathcal{M}$ and $w(\nu)$.

We use $\xi$ to denote the agent's beliefs about future observations. Together with a policy $\pi$ it defines a Bayesian mixture measure: $\p^\pi_\xi(\cdot) := \sum_{\nu \in \mathcal{M}} w(\nu) \p^\pi_\nu(\cdot)$. The posterior belief distribution of the agent after observing a history $h \in \mathcal{H}^*$ is $w(\nu|h) := w(\nu) \p^{\pi'}_\nu(h)/\p^{\pi'}_\xi(h)$. This definition is independent of the choice of $\pi'$ as long as $\p^{\pi'}_\xi(h) > 0$; we can fix a reference policy $\pi'$ just for this definition if we like. We sometimes also refer to the conditional distribution $\xi(or|ha) := \sum_{\nu \in \mathcal{M}} w(\nu|h) \nu(or|ha)$.

The agent's discount at a timestep is denoted $\gamma_t$. To normalize the agent's policy's value to $[0, 1]$, we introduce $\Gamma_t := \sum_{k=t}^\infty \gamma_k$. (Normalization makes value convergence nontrivial). We consider an agent with a bounded horizon: $\forall \varepsilon > 0 \ \exists m \ \forall t : \Gamma_{t+m} / \Gamma_t \leq \varepsilon$. Intuitively, this means that the agent does not become more and more farsighted over time. Note this does not require a finite horizon. A classic discount function giving a bounded horizon is a geometric one: for $0 \leq \gamma < 1$, $\gamma_t = \gamma^t$. The value of a policy $\pi$ in an environment $\nu$, given a history $h_{<t} \in \mathcal{H}^{t-1}$, is
\begin{equation}
    V^\pi_\nu(h_{<t}) := \frac{1}{\Gamma_t}\mathbb{E}^\pi_\nu\left[\sum_{k=t}^\infty \gamma_k r_k\Biggm\vert h_{<t}\right]
\end{equation}
Here, the expectation is with respect to the probability measure $\p^\pi_\nu$. Reinforcement Learning is the attempt to find a policy that makes this value high, without access to $\nu$.

\section{Inquisitive Reinforcement Learner}
We first describe how Inq exploits, then how it explores. It exploits by maximizing the discounted sum of its reward in expectation over its current beliefs, and it explores by following maximally informative ``exploratory expeditions'' of various lengths.

An optimal policy with respect to an environment $\nu$ is a policy that maximizes the value.
\begin{equation}
    \pi^*_{\nu}(\cdot) := \argmax_{\pi \in \Pi} V^\pi_\nu(\cdot)
\end{equation}
where $\Pi = \mathcal{H}^* \rightsquigarrow \mathcal{A}$ is the space of all policies. An optimal deterministic policy always exists \cite{Hutter:14tcdiscx}. When exploiting, Inq simply maximizes the value according to its belief distribution $\xi$. Since this policy is deterministic, we write $a^*(h_{<t})$ to mean the unique action at time $t$ for which $\pi_\xi^*(a | h_{<t}) = 1$. That is the exploitative action.

The most interesting feature of Inq is how it gets distracted by the opportunity to explore. Inq explores to learn. An agent has learned from an observation if its belief distribution $w$ changes significantly after making that observation. If the belief distribution has hardly changed, then the observation was not very informative. The typical information-theoretic measure for how well a distribution $Q$ approximates a distribution $P$ is the KL-divergence, $\KL(P||Q)$. Thus, a principled way to quantify the information that an agent gains in a timestep is the KL-divergence from the belief distribution at time $t+1$ to the belief distribution at time $t$. This is the rationale behind the construction of Orseau, et al.'s \shortcite{Hutter:13ksaprob} Knowledge Seeking Agent, which maximizes this expected information gain.

Letting $h_{<t} \in \mathcal{H}^{t-1}$ and $h' \in \mathcal{H}^*$, the information gain at time $t$ is defined: 
\begin{equation}
    \IG(h'|h_{<t}) := \sum_{\nu \in \mathcal{M}} w(\nu|h_{<t}h')\log \frac{w(\nu|h_{<t}h')}{w(\nu|h_{<t})}
\end{equation}
Recall that $w(\nu|h)$ is the posterior probability assigned to $\nu$ after observing $h$.

An $m$-step expedition, denoted $\alpha^m$, represents all contingencies for how an agent will act for the next $m$ timesteps. It is a deterministic policy that takes history-fragments of length less than $m$ and returns an action:
\begin{equation}
    \alpha^m : \bigcup_{i=0}^{m-1} \mathcal{H}^i \to \mathcal{A}
\end{equation}
$\p^{\alpha^m}_\xi(h_{<t+k}|h_{<t})$ is a conditional distribution defined for $0 \leq k \leq m$, which represents the conditional probability of observing $h_{<t+k}$ if the expedition $\alpha^m$ is followed starting at time $t$, after observing $h_{<t}$. Now we can consider the information-gain value of an $m$-step expedition. It is the expected information gain upon following that expedition:
\begin{multline}
	V^{\IG}(\alpha^m, h_{<t}) := \\ \sum_{h_{t:t+m-1}\in \mathcal{H}^m} \p^{\alpha^m}_\xi(h_{<t+m}|h_{<t}) \IG(h_{t:t+m-1}|h_{<t})	
\end{multline}

At a time $t$, one might consider many expeditions: the one-step expedition which maximizes expected information gain, the two-step expedition doing the same, etc. Or one might consider carrying on with an expedition that began three timesteps ago.

\begin{definition}
	At time $t$, the $m$-$k$ expedition is the $m$-step expedition beginning at time $t-k$ which maximized the expected information gain from that point.\footnote{Ties in the argmax are broken arbitrarily.}
\begin{equation}
	 \alpha^{\IG}_{m, k}(h_{<t}) := \argmax_{\alpha^m \ : \ \bigcup_{i=0}^{m-1} \mathcal{H}^i \to \mathcal{A}} V^{\IG}(\alpha^m, h_{<t-k})
\end{equation}
\end{definition}

% \alpha^m \in \mathcal{A}^{\bigcup_{i=0}^{m-1} \mathcal{H}^i}

Example expeditions are diagrammed in Figure \ref{fig:expeditions}.

\begin{figure}
    \centering
    \includegraphics[width=0.8\linewidth]{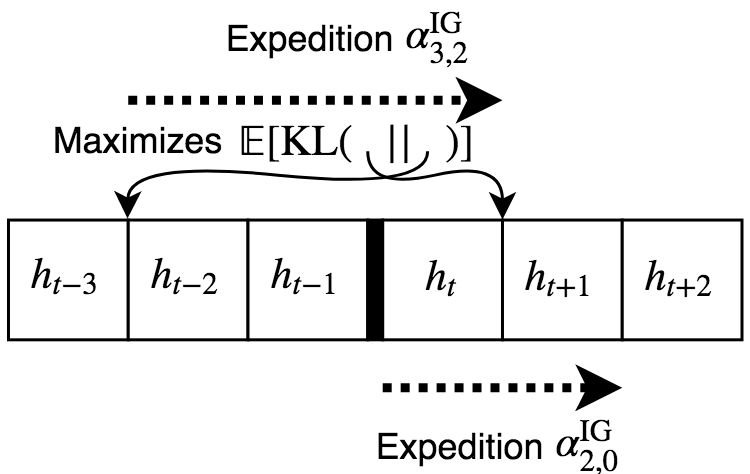}
    \caption{\textbf{Example Expeditions.} Expeditions maximize the expected KL-divergence from the posterior at the end to the posterior at the beginning.}
    \label{fig:expeditions}
\end{figure}

Expeditions are functions which return an action given what has been seen so far on the expedition. The $m$-$k$ exploratory action is the action to take at time $t$ according to the $m$-$k$ expedition:
\begin{equation}
    a^{\IG}_{m, k}(h_{<t}) := \alpha^{\IG}_{m, k}(h_{<t})(h_{t-k:t-1})
\end{equation}
Naturally, this is only defined for $k < m,t$, since the expedition function can't accept a history fragment of length $\geq m$, and $t-k$ must be positive. Note also that if $k=0$, $h_{t-k:t-1}$ evaluates to the empty string, $\epsilon$.

The reason Inq doesn't ignore expeditions that started in the past is that Inq must have some chance of actually executing the whole expedition (for every expedition). If the probability of completing an expedition is 0, one cannot use it for a bound on Inq's belief-accuracy.

\begin{definition}
Let $\rho (h_{<t},m,k)$ be the probability of taking the $m$-$k$ exploratory action after observing a history $h_{<t}$.
	\begin{multline}
				\rho (h_{<t},m,k) := \\ 
				\min\left\{\frac{1}{m^2(m+1)}, \eta V^{\IG}(\alpha^{\IG}_{m, k}(h_{<t}), h_{<t-k})\right\}
	\end{multline}
	where $\eta$ is an exploration constant.
\end{definition}

Note in the definition of $\rho (h_{<t},m,k)$ that the probability of following an expedition goes to $0$ if the expected information gain from that expedition goes to $0$. The first term in the $\min$ ensures the probabilities will not sum to more than 1. The total probability of exploration is defined: \begin{multline}
	 \beta(h_{<t}) := \sum_{m \in \mathbb{N}} \sum_{k<m,t} \rho(h_{<t},m,k) \leq \\
	 \sum_{m \in \mathbb{N}} \sum_{k<m,t} \frac{1}{m^2(m+1)} \leq \sum_{m \in \mathbb{N}} \sum_{k<m} \frac{1}{m^2(m+1)} = 1
\end{multline}

The feature that makes Inq inquisitive is that $\rho (h_{<t},m,k)$ is proportional to the expected information gain from the $m$-$k$ expedition, $V^{\IG}(\alpha^{\IG}_{m, k}(h_{<t}), h_{<t-k})$. 
 Note that completing an $m$-step expedition requires randomly deciding to explore in that way on $m$ separate occasions. While this may seem inefficient, if the agent always got boxed into long expeditions, the value of its policy would plummet infinitely often.

Finally, Inq's policy $\pi^\dagger$, defined in Algorithm \ref{alg:inq_agent}, takes the $m$-$k$ exploratory action with probability $\rho (\cdot,m,k)$, and takes the exploitative action otherwise.\footnote{This algorithm is written in a simplified way that does not halt, but if a real number in $[0, 1]$ is sampled first, the actions can be assigned to disjoint intervals successively until the sampled real number lands in one of them.}

\begin{algorithm}
\caption{Inquisitive Reinforcement Learner's Policy $\pi^\dagger$}\label{alg:inq_agent}
\begin{algorithmic}[1]
\While{True}
\State calculate $\rho (h_{<t},m,k)$ for all $m$ and for all $k < \min\{m, t\}$
\State take action $a^{\IG}_{m, k}(h_{<t})$ with probability $\rho (h_{<t},m,k)$
\State take action $a^*(h_{<t})$ with probability $1 - \beta(h_{<t})$

\EndWhile
\end{algorithmic}
\end{algorithm}

%\begin{equation}
%\begin{split}
%\pi^\dagger(a|h_{<t}) := \sum_{m = 1}^\infty \sum_{k=0}^{m-1} p^{exp}_{m, k}(h_{<t})[[a = a^{\IG}_{m, k}(h_{<t})]] 
%\\
%+ (1 - \sum_{m = 1}^\infty \sum_{k=0}^{m-1} p^{exp}_{m, k}(h_{<t}))[[a = a^*(h_{<t})]]
%\end{split}
%\end{equation}

\section{Strong Asymptotic Optimality}

Here we present our central result: that the value of $\pi^\dagger$ approaches the optimal value. We present the theorem, motivate the result, and proceed to the proof. We recommend the reader have Appendix \ref{app:notation} at hand for quickly looking up definitions and notation.

Before presenting the theorem, we clarify an assumption, and define the optimal value. We call the true environment $\mu$, and we assume that $\mu \in \mathcal{M}$. For $\mathcal{M}$ the class of computable environments, this is a very unassuming assumption. The optimal value is simply the value of the optimal policy with respect to the true environment:
\begin{equation}
    V^*_\mu(h_{<t}) := \sup_{\pi \in \Pi} V^\pi_\mu(h_{<t}) = V^{\pi^*_{\mu}}_\mu(h_{<t})
\end{equation}

Recall also that we have assumed the agent has a bounded horizon in the sense that $\forall \varepsilon \ \exists m \ \forall t : \Gamma_{t+m}/\Gamma_t \leq \varepsilon$. The Strong Asymptotic Optimality theorem is that under these conditions, the value of Inq's policy approaches the optimal value with probability 1, when actions are sampled from Inq's policy and observations and rewards are sampled from the true environment $\mu$.

\begin{theorem}[Strong Asymptotic Optimality] \label{thm:sao}
As $t \to \infty$, 
\begin{equation*}
    V^*_\mu(h_{<t}) - V^{\pi^\dagger}_\mu(h_{<t}) \to 0 \ \ \textrm{with }\p^{\pi^\dagger}_\mu \textrm{\!\!-prob. 1}
\end{equation*}
where $\mu \in \mathcal{M}$ is the true environment.
\end{theorem}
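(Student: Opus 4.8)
The plan is to squeeze $V^*_\mu(h_{<t})-V^{\pi^\dagger}_\mu(h_{<t})$ to $0$ through a chain of comparisons routed through the belief $\xi$ and the $\xi$-optimal policy $\pi^*_\xi$. Write $v^*_m(h_{<t}):=\max_{\alpha^m}V^{\IG}(\alpha^m,h_{<t})$. Everything rests on one lemma — the \emph{exhaustion lemma} — that for each fixed $m$, $v^*_m(h_{<t})\to 0$ with $\p^{\pi^\dagger}_\mu$-probability $1$. I will use three auxiliary facts. (i) Since $\mu\in\mathcal M$, $-\log w(\mu\mid h_{<t})$ is a nonnegative $\p^{\pi^\dagger}_\mu$-supermartingale (its increment has conditional mean $-\KL(\mu(\cdot\mid h_{<t}a_t)\|\xi(\cdot\mid h_{<t}a_t))\le 0$), so it converges a.s.\ and there is $c>0$ with $w(\mu\mid h_{<t})\ge c$ for all $t$, a.s. (ii) The prior has finite entropy $H(w)<\infty$ — this is exactly where $\varepsilon>0$ in $w(\nu)=2^{-K(\nu)(1+\varepsilon)}/\mathcal N$ is used, via $\sum_n n\,2^{-n\varepsilon}<\infty$ — hence under any policy the total expected information gain $\E^{\pi^\dagger}_\xi[\sum_t\IG(h_t\mid h_{<t})]$, being the mutual information between the environment and the infinite history, is at most $H(w)<\infty$. (iii) Bounded horizon: given $\varepsilon$, pick $m$ with $\Gamma_{t+m}/\Gamma_t\le\varepsilon$ for all $t$, so two policies whose $m$-step predictive laws are TV-close have values within that TV distance plus $2\varepsilon$.

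\textbf{The exhaustion lemma is the crux and the main obstacle.} The difficulty is that $V^{\IG}$ is a $\xi$-expectation whereas Inq acts in $\mu$, so realized and expected information gain need not agree; the remedy is to prove the lemma under the belief measure $\p^{\pi^\dagger}_\xi$ and transfer it to $\p^{\pi^\dagger}_\mu$ via $\p^{\pi^\dagger}_\mu\ll\p^{\pi^\dagger}_\xi$ (which holds since $\p^{\pi^\dagger}_\xi\ge w(\mu)\,\p^{\pi^\dagger}_\mu$). Suppose the lemma fails: some rational $\delta>0$ has $\p^{\pi^\dagger}_\xi(v^*_m(h_{<t})\ge\delta\text{ infinitely often})>0$. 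Let $t_1<t_2<\cdots$ be the stopping times hitting these occurrences with gaps exceeding $m$. The structural point is that $\rho(h_{<t_i+j},m,j)=\min\{1/(m^2(m+1)),\eta\,v^*_m(h_{<t_i})\}$ depends only on $h_{<t_i}$; hence at each step $t_i+j$ ($0\le j<m$) Inq takes the prescribed action of the expedition $\alpha^{\IG}_{m,0}(h_{<t_i})$ with probability $\ge q:=\min\{1/(m^2(m+1)),\eta\delta\}$, independently of the observations, and so completes the entire $m$-step expedition with conditional probability $\ge q^m$. Given completion, the expected windowed information gain equals $v^*_m(h_{<t_i})\ge\delta$ — because under $\p^{\pi^\dagger}_\xi$ the observations are $\xi$-distributed, exactly as in the definition of $V^{\IG}$ — and by the telescoping identity $\E^{\alpha}_\xi[\IG(h_{t:t+m-1}\mid h_{<t})\mid h_{<t}]=\sum_{j<m}\E^{\alpha}_\xi[\IG(h_{t+j}\mid h_{<t+j})\mid h_{<t}]$ (the $\xi$-posterior is a $\p_\xi$-martingale) this is a sum of per-step expected gains over the window. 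Summing over the disjoint windows, $\sum_i q^m\delta\,\p^{\pi^\dagger}_\xi(t_i<\infty)\le\E^{\pi^\dagger}_\xi[\sum_t\IG(h_t\mid h_{<t})]\le H(w)<\infty$, forcing $\p^{\pi^\dagger}_\xi(t_i<\infty)\to 0$ — impossible, since on the assumed event every $t_i$ is finite.

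\textbf{Corollaries} (all $\p^{\pi^\dagger}_\mu$-a.s.). Since $V^{\IG}$ of any expedition equals the mutual information between the environment and the $m$-step outcome, i.e.\ $\sum_\nu w(\nu\mid h_{<t})\KL(\p^{\pi}_\nu(\cdot\mid h_{<t})\|\p^{\pi}_\xi(\cdot\mid h_{<t}))$, keeping only the $\nu=\mu$ term gives, for any fixed deterministic policy $\pi$, $\KL_m(\p^{\pi}_\mu(\cdot\mid h_{<t})\|\p^{\pi}_\xi(\cdot\mid h_{<t}))\le v^*_m(h_{<t})/w(\mu\mid h_{<t})\le v^*_m(h_{<t})/c\to 0$; by joint convexity of KL this also holds for a randomized policy such as $\pi^\dagger$ (write it as a mixture of deterministic ones), and with (iii) and Pinsker we get (a) $|V^{\pi}_\mu(h_{<t})-V^{\pi}_\xi(h_{<t})|\to 0$ for every fixed policy. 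Next, (b) $\beta(h_{<t})\to 0$: split $\sum_{m'}$ at a large $M$ (the tail is $\le 1/(M+1)$) and use $v^*_{m'}\to 0$ on the finite head. Finally, (c) the look-ahead bound $\E^{\pi}_\xi[v^*_{m'}(h_{<t+d})\mid h_{<t}]\le v^*_{m'+d}(h_{<t})$ for any fixed policy $\pi$ and $d\ge 0$, obtained by prepending $\pi$'s $d$ actions to an $m'$-expedition to form an $(m'+d)$-expedition and discarding its leading per-step information-gain terms via the telescoping identity; dividing by $w(\mu\mid\cdot)\ge c$ then yields $\E^{\pi}_\mu[v^*_{m'}(h_{<t+d})\mid h_{<t}]\le v^*_{m'+d}(h_{<t})/c\to 0$, hence $\E^{\pi}_\mu[\beta(h_{<t+d})\mid h_{<t}]\to 0$ for each fixed $d$.

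\textbf{Assembly}, pivoting on $\pi^*_\xi$ (whose action at $h_{<t}$ is the exploitative action $a^*(h_{<t})$). Applying (a) to the fixed policies $\pi^*_\mu$ and $\pi^*_\xi$ and using $\xi$-optimality of $\pi^*_\xi$,
\[
V^*_\mu(h_{<t})=V^{\pi^*_\mu}_\mu(h_{<t})\le V^{\pi^*_\mu}_\xi(h_{<t})+o(1)\le V^{\pi^*_\xi}_\xi(h_{<t})+o(1)\le V^{\pi^*_\xi}_\mu(h_{<t})+o(1).
\]
It remains to bound the exploration damage, $V^{\pi^*_\xi}_\mu(h_{<t})\le V^{\pi^\dagger}_\mu(h_{<t})+o(1)$: by the performance-difference identity the gap equals $\Gamma_t^{-1}\sum_{k\ge t}\gamma_k\,\E^{\pi^*_\xi}_\mu[\,Q^{\pi^\dagger}_\mu(h_{<k},a^*(h_{<k}))-V^{\pi^\dagger}_\mu(h_{<k})\mid h_{<t}]$, and since $\pi^\dagger$ plays $a^*(h_{<k})$ with probability $1-\beta(h_{<k})$ and rewards lie in $[0,1]$, the bracket is $\le\beta(h_{<k})$; splitting the $k$-sum at $t+m$ bounds the tail by $\Gamma_{t+m}/\Gamma_t\le\varepsilon$ and the head by $\max_{t\le k<t+m}\E^{\pi^*_\xi}_\mu[\beta(h_{<k})\mid h_{<t}]\to 0$ by (c). Chaining, $0\le V^*_\mu(h_{<t})-V^{\pi^\dagger}_\mu(h_{<t})\le\varepsilon+o(1)$ a.s.; as $\varepsilon>0$ was arbitrary, the left side is $o(1)$, which is the theorem.
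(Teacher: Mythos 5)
Your proposal is correct in its overall architecture and that architecture coincides with the paper's: the same chain $V^*_\mu \le V^{\pi^*_\mu}_\xi \le V^{\pi^*_\xi}_\xi \le V^{\pi^*_\xi}_\mu \le V^{\pi^\dagger}_\mu$ (up to vanishing errors), driven by the same four ingredients — $\inf_t w(\mu|h_{<t})>0$ via a (super)martingale argument, the "exhaustion" of information ($\max_{\alpha^m}V^{\IG}\to 0$, which is the paper's $\rho(h_{<t},m,k)\to 0$) via the $q^m$ expedition-completion probability and the finite prior entropy, on-policy predictive convergence via $V^{\IG}=\sum_\nu w(\nu|\cdot)\KL(\p_\nu\|\p_\xi)$ plus Pinsker, and $\beta\to 0$ by truncating the sum over $m$. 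Your organization of the exhaustion lemma by stopping times and disjoint windows is a repackaging of the paper's summability bound $\E\sum_t\rho(h_{<t},m,0)^{m+1}\le m\,\eta\,\mathrm{Ent}(w)/w(\mu)$. The one genuinely different piece is the last link: the paper bounds the $m$-step likelihood ratio $\p^{\pi^*_\xi}_\mu(h_{t:t+m-1}|h_{<t})\le\p^{\pi^\dagger}_\mu(h_{t:t+m-1}|h_{<t})/\prod_k(1-\beta(h_{<k}))$ and then asserts the denominator is eventually uniformly close to $1$; you instead use a performance-difference decomposition together with your look-ahead bound (c), $\E^{\pi}_\xi[v^*_{m'}(h_{<t+d})\mid h_{<t}]\le v^*_{m'+d}(h_{<t})$, to control $\E^{\pi^*_\xi}_\mu[\beta(h_{<k})\mid h_{<t}]$ at future times. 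This buys something real: it controls $\beta$ along counterfactual continuations in expectation rather than relying on $\beta\to 0$ along the realized trajectory only, which is exactly the point where the paper's step $(g)$ (the $\max_{t\le k<t+m}\beta(h_{<k})$ over all $m$-step continuations) is most delicate.

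One slip to repair in that final step: the performance-difference identity should carry weights $\Gamma_k/\Gamma_t$, not $\gamma_k/\Gamma_t$ — the advantage term $Q^{\pi^\dagger}_\mu(h_{<k},a^*(h_{<k}))-V^{\pi^\dagger}_\mu(h_{<k})$, bounded by $\beta(h_{<k})$ only after normalizing by $\Gamma_k$, is accumulated once per visited timestep, so the unnormalized contribution at time $k$ is of order $\Gamma_k\beta(h_{<k})$. The argument still closes: fixing $m_0$ with $\Gamma_{t+m_0}/\Gamma_t\le 1/2$ gives $\sum_{k\ge t}\Gamma_k/\Gamma_t\le 2m_0=:C$ uniformly in $t$, so the tail of your $k$-sum is at most $C\,\Gamma_{t+m}/\Gamma_t\le C\varepsilon$ and the head is at most $C\max_{t\le k<t+m}\E^{\pi^*_\xi}_\mu[\beta(h_{<k})\mid h_{<t}]\to 0$ by (c) — but as written the identity is not correct and the reader cannot verify the bound without this fix.
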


For a Bayesian agent, uncertainty about on-policy observations goes to $0$. Since ``on-policy'' for Inq includes, with some probability, all maximally informative expeditions, Inq eventually has little uncertainty about the result of any course of action, and can therefore successfully select the optimal course. For any fixed horizon, Inq's mixture measure $\xi$ approaches the true environment $\mu$.

We use the following notation for a particular KL-divergence that plays a central role in the proof:
\begin{equation}
    \KL_{h_{<t}, n}(\p^\pi_{\nu_1} || \p^\pi_{\nu_2}) := \sum_{h' \in \mathcal{H}^n} \p^\pi_{\nu_1}(h'|h_{<t}) \log \frac{\p^\pi_{\nu_1}(h'|h_{<t})}{\p^\pi_{\nu_2}(h'|h_{<t})}
\end{equation}
This quantifies the difference between the expected observations of two different environments that would arise in the next $n$ timesteps when following policy $\pi$. $\KL_{h_{<t}, \infty}$ denotes the limit of the above as $n \to \infty$, which exists by \cite[proof of Theorem 3]{Hutter:13ksaprob}.

% $$\KL_{h_{<t}, 1}(\p^\pi_{\nu_1} || \p^\pi_{\nu_2}) := \sum_{h' \in \mathcal{H}} \p^\pi_{\nu_1}(h'|h_{<t}) \log \frac{\p^\pi_{\nu_1}(h'|h_{<t})}{\p^\pi_{\nu_2}(h'|h_{<t})}$$
% This quantifies the difference between the expected observations of two different environments that would arise in the next timestep when following policy $\pi$.
% $$\KL_{h_{<t}, n}(\p^\pi_{\nu_1} || \p^\pi_{\nu_2}) := \sum_{i = 0}^{n - 1} \sum_{h' \in \mathcal{H}^i} \p^\pi_{\nu_1}(h'|h_{<t}) \KL_{h_{<t}h', 1}(\p^\pi_{\nu_1} || \p^\pi_{\nu_2})$$
% The above equation is a bit unnatural to parse when $i=0$; $\mathcal{H}^0$ is just $\{\epsilon\}$ and $\p^\pi_{\nu_1}(\epsilon|h_{<t}) = 1$ (recalling $\epsilon$ is the empty string). That KL-divergence quantifies the difference between two environments over the next $n$ steps. Finally, for the infinite case,
% $$\KL_{h_{<t}, \infty}(\p^\pi_{\nu_1} || \p^\pi_{\nu_2}) := \sum_{i = 0}^{\infty} \sum_{h' \in \mathcal{H}^i} \p^\pi_{\nu_1}(h'|h_{<t}) \KL_{h_{<t}h', 1}(\p^\pi_{\nu_1} || \p^\pi_{\nu_2})$$
In dealing with the KL-divergence, we simplify matters by asserting that $0 \log 0 := 0$, and $0 \log \frac{0}{0} := 0$.

We begin with a lemma that equates the information gain value of an expedition with the expected prediction error. The KL-divergence on the right hand side represents how different $\nu$ and $\xi$ appear when following the expedition in question.

\begin{restatable}{lemma}{lemorseau}\label{lemorseau4}
\begin{equation*}
V^{\IG}(\alpha^m, h_{<t}) = \sum_{\nu \in \mathcal{M}} w(\nu|h_{<t}) \KL_{h_{<t}, m} \left( \p^{\alpha^m}_\nu \Bigm\vert \Bigm\vert \p^{\alpha^m}_\xi\right)	
\end{equation*}
\end{restatable}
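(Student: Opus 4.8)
The plan is to unfold both definitions, reduce the claim to one application of Bayes' rule on the length-$m$ block of observations, and then interchange two summations. Writing $h' = h_{t:t+m-1} \in \mathcal{H}^m$ throughout, the definitions of $V^{\IG}$ and $\IG$ give
\begin{equation*}
V^{\IG}(\alpha^m, h_{<t}) = \sum_{h' \in \mathcal{H}^m} \p^{\alpha^m}_\xi(h'|h_{<t}) \sum_{\nu \in \mathcal{M}} w(\nu|h_{<t}h') \log \frac{w(\nu|h_{<t}h')}{w(\nu|h_{<t})} .
\end{equation*}
The crux is the identity
\begin{equation*}
\frac{w(\nu|h_{<t}h')}{w(\nu|h_{<t})} = \frac{\p^{\alpha^m}_\nu(h'|h_{<t})}{\p^{\alpha^m}_\xi(h'|h_{<t})} ,
\end{equation*}
valid for every $h'$ with $\p^{\alpha^m}_\xi(h'|h_{<t}) > 0$; the remaining $h'$ contribute nothing to the outer sum (by the convention $0\log\frac00 := 0$). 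To establish it I would use that $w(\nu\mid\cdot)$ does not depend on the reference policy in its definition, so the update from $w(\cdot|h_{<t})$ to $w(\cdot|h_{<t}h')$ is an instance of Bayes' rule with likelihoods $\p^{\alpha^m}_\rho(h'|h_{<t})$, in which the (common, $0/1$-valued) action probabilities cancel between $\p^{\alpha^m}_\nu(h'|h_{<t})$ and each summand of the mixture $\p^{\alpha^m}_\xi(h'|h_{<t}) = \sum_{\rho} w(\rho|h_{<t})\,\p^{\alpha^m}_\rho(h'|h_{<t})$, leaving exactly the ratio of posterior weights.

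Substituting the identity, the prefactor $\p^{\alpha^m}_\xi(h'|h_{<t})$ cancels and the nested sum becomes
\begin{equation*}
V^{\IG}(\alpha^m,h_{<t}) = \sum_{h'\in\mathcal{H}^m}\ \sum_{\nu\in\mathcal{M}} w(\nu|h_{<t})\, \p^{\alpha^m}_\nu(h'|h_{<t}) \log\frac{\p^{\alpha^m}_\nu(h'|h_{<t})}{\p^{\alpha^m}_\xi(h'|h_{<t})} .
\end{equation*}
It then only remains to swap the two summations and recognize the inner sum over $h'$ as $\KL_{h_{<t},m}(\p^{\alpha^m}_\nu\,\|\,\p^{\alpha^m}_\xi)$, which is precisely the asserted right-hand side. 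This is essentially the computation underlying Orseau et al.'s \shortcite{Hutter:13ksaprob} knowledge-seeking agent, specialized here to the finite $m$-step expedition $\alpha^m$.

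The one delicate point — and the main obstacle — is justifying the interchange of summations, since $\mathcal{M}$ is countably infinite and the logarithmic terms need not be nonnegative. I would control it via the bound $\p^{\alpha^m}_\xi(h'|h_{<t}) \ge w(\nu|h_{<t})\,\p^{\alpha^m}_\nu(h'|h_{<t})$, which makes each logarithm at most $-\log w(\nu|h_{<t})$; hence the positive part of the $(\nu,h')$-term is dominated by $w(\nu|h_{<t})\bigl(-\log w(\nu|h_{<t})\bigr)\p^{\alpha^m}_\nu(h'|h_{<t})$, whose total over all $h'$ and all $\nu$ is at most the entropy of the posterior $w(\cdot|h_{<t})$. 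Finiteness of that entropy follows from the $\varepsilon>0$ in the prior $w(\nu)=2^{-K(\nu)(1+\varepsilon)}/\mathcal{N}$ (the prior has finite entropy, and an analogous bound transfers to the posterior), so the double series is absolutely convergent and Fubini's theorem permits the reordering. Finiteness of $\KL_{h_{<t},m}$ itself is likewise available from \cite[proof of Theorem 3]{Hutter:13ksaprob}.
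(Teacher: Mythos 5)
Your proposal is correct, but it takes a different route from the paper only in the sense that the paper does not give a derivation at all: its entire proof of Lemma~\ref{lemorseau4} is a citation of \cite[Equation 4]{Hutter:13ksaprob}, whereas you reconstruct that computation from scratch. The substance of your argument matches the cited one: the Bayes'-rule identity $w(\nu|h_{<t}h')/w(\nu|h_{<t}) = \p^{\alpha^m}_\nu(h'|h_{<t})/\p^{\alpha^m}_\xi(h'|h_{<t})$ (with the deterministic action probabilities cancelling, and zero-probability $h'$ discarded via the $0\log\frac{0}{0}:=0$ convention), cancellation of the prefactor $\p^{\alpha^m}_\xi(h'|h_{<t})$, and an exchange of the sums over $h'$ and $\nu$. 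What your version buys is self-containedness and an explicit treatment of the one genuinely delicate point, the interchange of two countably infinite sums (note $\mathcal{H}^m$ is indeed countably infinite here because $\mathcal{R}=[0,1]\cap\mathbb{Q}$); your bound $\p^{\alpha^m}_\xi(h'|h_{<t}) \ge w(\nu|h_{<t})\p^{\alpha^m}_\nu(h'|h_{<t})$ correctly dominates the positive parts by the posterior entropy, and finiteness of one signed part suffices for the exchange (or one can note that for each fixed $\nu$ the inner sum is a nonnegative KL term, so the right-hand side is well defined in $[0,\infty]$). The only slightly hand-waved step is ``an analogous bound transfers to the posterior'': finiteness of the posterior entropy does not follow from monotonicity of $x\mapsto -x\log x$, but it is easily established by expanding $w(\nu|h_{<t}) = w(\nu)\p^{\pi'}_\nu(h_{<t})/\p^{\pi'}_\xi(h_{<t})$ inside the logarithm, which splits the entropy into three manifestly finite pieces (using $\mathrm{Ent}(w)<\infty$ and $x\log(1/x)\le 1/e$), or by citing the same finiteness results of Orseau et al.\ that the paper invokes elsewhere. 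The paper's citation buys brevity and defers these measure-theoretic details to the source; your derivation makes them visible, at the cost of needing that one extra line.
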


Proofs of Lemmas appear in Appendix \ref{app:proofs}.

Recall that $w(\nu|h_{<t})$ is the posterior weight that Inq assigns to the environment $\nu$ after observing $h_{<t}$. We show that the infimum of this value is strictly positive with probability 1.

\begin{restatable}{lemma}{lemwmuinf}\label{lemwmuinf}
$\inf_t w(\mu|h_{<t}) > 0 \ \ \textrm{w.$\p^\pi_\mu$-p. 1}$
\end{restatable}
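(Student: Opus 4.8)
The plan is to show that the posterior weight on the true environment $\mu$ is a bounded submartingale (or close enough) under $\p^{\pi^\dagger}_\mu$, and hence converges almost surely to a positive limit, which makes the infimum positive. First I would recall that $w(\mu|h_{<t}) = w(\mu)\p^{\pi^\dagger}_\mu(h_{<t})/\p^{\pi^\dagger}_\xi(h_{<t})$, and that since $\p^{\pi^\dagger}_\xi = \sum_\nu w(\nu)\p^{\pi^\dagger}_\nu \ge w(\mu)\p^{\pi^\dagger}_\mu$, the reciprocal $1/w(\mu|h_{<t}) = \p^{\pi^\dagger}_\xi(h_{<t})/(w(\mu)\p^{\pi^\dagger}_\mu(h_{<t}))$ is a nonnegative $\p^{\pi^\dagger}_\mu$-martingale: indeed
\begin{equation*}
\mathbb{E}^{\pi^\dagger}_\mu\!\left[\frac{1}{w(\mu|h_{1:t})}\,\Bigm\vert h_{<t}\right] = \frac{1}{w(\mu)}\sum_{h_t}\p^{\pi^\dagger}_\mu(h_t|h_{<t})\frac{\p^{\pi^\dagger}_\xi(h_{<t}h_t)}{\p^{\pi^\dagger}_\mu(h_{<t}h_t)} = \frac{\p^{\pi^\dagger}_\xi(h_{<t})}{w(\mu)\p^{\pi^\dagger}_\mu(h_{<t})} = \frac{1}{w(\mu|h_{<t})},
\end{equation*}
using that the action probabilities $\pi^\dagger(a_t|h_{<t})$ cancel between numerator and denominator and that $\p^{\pi^\dagger}_\mu(h_{<t}h_t)$ is zero exactly when the summand should be dropped.

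Next I would invoke the martingale convergence theorem: a nonnegative martingale converges almost surely to a finite limit. Hence $1/w(\mu|h_{<t})$ converges $\p^{\pi^\dagger}_\mu$-a.s.\ to some finite random variable $L < \infty$, so $w(\mu|h_{<t}) \to 1/L > 0$ almost surely. A convergent sequence of strictly positive numbers (each $w(\mu|h_{<t}) > 0$ since $w(\mu) > 0$ and $\p^{\pi^\dagger}_\mu(h_{<t}) > 0$ on histories of positive probability) with a strictly positive limit has a strictly positive infimum, which gives $\inf_t w(\mu|h_{<t}) > 0$ with $\p^{\pi^\dagger}_\mu$-probability 1, as claimed.

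The main obstacle is making the martingale identity airtight despite the normalization constant $\mathcal{N}$ and the subtleties of conditioning on histories that may have probability zero under $\mu$: one needs $1/w(\mu|h_{<t})$ to be well-defined and the conditional expectation to telescope correctly on the event $\{\p^{\pi^\dagger}_\mu(h_{<t}) > 0\}$, which is the event of full measure we care about. A cleaner alternative that sidesteps defining a martingale on a possibly-null set is to directly bound the expected log-loss: since $\sum_\nu w(\nu)\p^{\pi^\dagger}_\nu \ge w(\mu)\p^{\pi^\dagger}_\mu$ everywhere, we have $-\log w(\mu|h_{<t}) \le -\log w(\mu)$ is false in general, but $\mathbb{E}^{\pi^\dagger}_\mu[-\log w(\mu|h_{1:t})]$ is nondecreasing and bounded by $-\log w(\mu) < \infty$ (a standard fact: the expected cumulative log-loss of the Bayes mixture relative to $\mu$ is at most $-\log w(\mu)$). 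Then $-\log w(\mu|h_{<t})$ is an $L^1$-bounded nonnegative submartingale, converges a.s., and the conclusion follows as above. Either route works; I would present the submartingale/log-loss version since it avoids null-set bookkeeping and connects to the familiar $-\log w(\mu)$ bound. Note that the exponent $\varepsilon$ in $w(\nu) = 2^{-K(\nu)(1+\varepsilon)}/\mathcal{N}$ plays no role here beyond guaranteeing $w(\mu) > 0$, which is all we need.
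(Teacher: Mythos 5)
Your main argument is essentially the paper's own proof: you show that $w(\mu|h_{<t})^{-1} = w(\mu)^{-1}\,\p^{\pi^\dagger}_\xi(h_{<t})/\p^{\pi^\dagger}_\mu(h_{<t})$ is a nonnegative $\p^{\pi^\dagger}_\mu$-martingale and invoke the martingale convergence theorem to get an a.s.\ finite limit, hence a strictly positive infimum, which is exactly the route taken in Appendix B. One caution about the alternative you say you would present instead: $-\log w(\mu|h_{1:t}) = -\log w(\mu) + \log\bigl(\p^{\pi^\dagger}_\xi(h_{1:t})/\p^{\pi^\dagger}_\mu(h_{1:t})\bigr)$ is a nonnegative \emph{supermartingale} (its expectation equals $-\log w(\mu)$ minus a KL term that grows with $t$, so it is nonincreasing, not nondecreasing), so the conclusion still follows from supermartingale convergence, but the submartingale/monotonicity claims as you stated them have the direction reversed.
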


Next, we show that every exploration probability $\rho(h_{<t}, m, k)$ goes to $0$.
From here, all ``w.p.1'' statements mean with $\p^{\pi^\dagger}_\mu$-probability 1, if not otherwise specified.

\begin{restatable}{lemma}{lemexppart}\label{lemexppart0}
\begin{equation*}
	\rho(h_{<t}, m, k) \toas^{t \to \infty} 0 \ \ \mathrm{w.p. 1}
\end{equation*} 
\end{restatable}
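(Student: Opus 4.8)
The plan is to show that the information-gain value of any expedition vanishes, and then invoke the definition of $\rho$ directly: since $\rho(h_{<t}, m, k) \leq \eta V^{\IG}(\alpha^{\IG}_{m,k}(h_{<t}), h_{<t-k})$, it suffices to show that $V^{\IG}(\alpha^{\IG}_{m,k}(h_{<t}), h_{<t-k}) \to 0$ w.p.1 as $t \to \infty$, for each fixed $m$ and $k$. Using Lemma \ref{lemorseau4}, we can rewrite this information-gain value as $\sum_{\nu} w(\nu|h_{<t-k}) \KL_{h_{<t-k}, m}(\p^{\alpha^m}_\nu \,||\, \p^{\alpha^m}_\xi)$, so the task reduces to showing that this posterior-weighted sum of KL-divergences goes to zero along the realized history.

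First I would control the total information gained over the agent's lifetime. The natural tool is a telescoping/martingale argument on the posterior $w(\cdot | h_{<t})$: because the prior $w$ has finite entropy (this is exactly why $\varepsilon > 0$ was imposed), the expected cumulative information gain over all timesteps is bounded — informally, $\sum_t \E[\IG(h_t | h_{<t})]$ is finite, since each term is the expected drop in the entropy-like potential and the potential is bounded. Then, because Inq actually follows the $m$-$k$ expedition with probability $\rho(h_{<t}, m, k)$, the expected information it extracts from that expedition at time $t$ is at least $\rho(h_{<t},m,k) \cdot V^{\IG}(\alpha^{\IG}_{m,k}(h_{<t}), h_{<t-k})$, which (off the min-cap regime) is $\eta \big(V^{\IG}(\alpha^{\IG}_{m,k}(h_{<t}), h_{<t-k})\big)^2$. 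Summability of the left side over $t$ then forces $V^{\IG}(\alpha^{\IG}_{m,k}(h_{<t}), h_{<t-k}) \to 0$ w.p.1 — this is a Borel–Cantelli-type conclusion, or more carefully a conclusion from the convergence of a nonnegative series whose terms dominate a function of the quantity we want to vanish. The cap $\tfrac{1}{m^2(m+1)}$ is harmless here because once $V^{\IG}$ drops below the cap (which it eventually must, by the summability argument on the uncapped contributions, or simply because we only care about the limit), the relevant inequality $\rho \geq \eta V^{\IG} \cdot \mathbf{1}[\text{below cap}]$ kicks in.

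The main obstacle is making the ``expected cumulative information gain is finite'' step rigorous on-policy: the information gain $\IG$ is defined via the posterior update, which is policy-independent in value but the expectation is taken under $\p^{\pi^\dagger}_\mu$, and one must relate the information gained from an expedition (which concerns hypothetical observations along $\alpha^{\IG}_{m,k}$) to the information actually gained along the realized trajectory. The key identity is that $\E^{\pi^\dagger}_\mu[\IG(h_{t:t+m-1} | h_{<t}) \mid h_{<t}]$, restricted to the event that $\pi^\dagger$ happens to follow exactly the expedition $\alpha^{\IG}_{m,k}(h_{<t})$ over those $m$ steps, equals $\mu$'s expected information gain along that expedition, which by a change of measure (dominated by $\xi$ since $w(\mu|h_{<t})$ is bounded below by Lemma \ref{lemwmuinf}) is comparable to the $\xi$-expected value $V^{\IG}$. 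Concretely: the probability $\pi^\dagger$ follows the full $m$-step expedition is at least $\prod$ of the per-step $\rho$'s, but we only need it to be positive and, more usefully, we use that Inq follows the very first exploratory action with probability $\rho(h_{<t},m,k)$ and bound the single-timestep expected information gain from below in terms of a fraction of $V^{\IG}$. I would lean on Lemma \ref{lemwmuinf} to pass between $\p^{\pi^\dagger}_\mu$-expectations and $\p^{\pi^\dagger}_\xi$-expectations (absolute continuity with a uniformly bounded-below density factor $w(\mu|h_{<t})$), which is precisely where that lemma earns its keep. Once the cumulative bound is in place, the conclusion $\rho(h_{<t},m,k) \to 0$ is immediate from $\rho \leq \eta V^{\IG}$ and $V^{\IG} \to 0$.
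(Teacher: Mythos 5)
There is a genuine gap at the key quantitative step. You claim that, because Inq takes the first exploratory action with probability $\rho(h_{<t},m,k)$, the expected information actually extracted at time $t$ is at least $\rho(h_{<t},m,k)\cdot V^{\IG}(\alpha^{\IG}_{m,k}(h_{<t}),h_{<t-k})$, i.e.\ (off the cap) $\eta (V^{\IG})^2$, and you explicitly propose to ``bound the single-timestep expected information gain from below in terms of a fraction of $V^{\IG}$.'' That bound is false in general: the expected information gain of an $m$-step expedition can be concentrated entirely in its later steps (e.g.\ the informative observation only occurs after $m$ specific actions), so taking only the first action of the expedition, or any single step of it, need not yield any fixed fraction of $V^{\IG}$. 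The paper's proof is built precisely to avoid this: since $\rho(h_{<t+i},m,i)=\rho(h_{<t},m,0)$, the probability that $\pi^\dagger$ follows the \emph{entire} $m$-step expedition is $\rho(h_{<t},m,0)^m$, which gives $\p^{\pi^\dagger}_\xi(h_{t:t+m-1}|h_{<t}) \geq \rho(h_{<t},m,0)^m\, \p^{\alpha^{\IG}_{m,0}}_\xi(h_{t:t+m-1}|h_{<t})$, and therefore what is shown to be summable is $\rho(h_{<t},m,0)^{m+1}$ (one factor $\rho \le \eta V^{\IG}$, the other $m$ factors paying for the change of policy), summed over non-overlapping blocks $t\in m\mathbb{N}+i$ so the block-wise information gains telescope against the finite prior entropy. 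Summability of $\rho^{m+1}$ still yields $\rho\to 0$ w.p.1, which is all the lemma needs; your stronger claimed bound $\sum_t \eta (V^{\IG})^2<\infty$ does not follow from your argument.

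Two smaller points. Your overall skeleton (finite prior entropy bounds the cumulative expected on-policy information gain; a lower bound on extracted information in terms of $\rho$ and $V^{\IG}$; summability forces convergence) is exactly the paper's strategy, so once you replace the single-step bound by the full-expedition bound with $\rho^m$ you recover the paper's proof. Also, the change of measure you propose via Lemma \ref{lemwmuinf} is not what is needed here: to bound a $\p^{\pi^\dagger}_\mu$-expectation of a nonnegative quantity by a $\p^{\pi^\dagger}_\xi$-expectation one only needs the prior weight, $w(\mu)\,\E^{\pi^\dagger}_\mu[\cdot]\le \E^{\pi^\dagger}_\xi[\cdot]$, which is immediate from the mixture definition of $\xi$; the paper's proof of this lemma does not invoke Lemma \ref{lemwmuinf} at all (that lemma is used later, in the proof of Lemma \ref{lempredconv}), and a lower bound on the posterior density is in any case the wrong direction for this step.
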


The essence of the proof is that with a finite-entropy prior, there is only a finite amount of information to gain, so the expected information gain (and the exploration probability) goes to 0.

Next, we show that the total exploration probability goes to 0:
\begin{restatable}{lemma}{lemexp}\label{lemexp0}
\begin{equation*}
	\beta(h_{<t}) \to 0 \ \ \mathrm{w.p. 1}
\end{equation*} 
\end{restatable}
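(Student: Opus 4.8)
The plan is to deduce Lemma~\ref{lemexp0} from Lemma~\ref{lemexppart0} together with the uniform summability built into the definition of $\rho$. Recall that
\[
\beta(h_{<t}) = \sum_{m \in \mathbb{N}} \sum_{k<m,t} \rho(h_{<t},m,k),
\qquad
0 \le \rho(h_{<t},m,k) \le \frac{1}{m^2(m+1)},
\]
and that $\sum_{m} \sum_{k<m} \frac{1}{m^2(m+1)} = 1 < \infty$. So $\beta(\cdot)$ is a sum of nonnegative terms, each dominated by a summable envelope $g(m,k) := \frac{1}{m^2(m+1)}$ that does not depend on $t$, and by Lemma~\ref{lemexppart0} each individual term tends to $0$ w.p.~1 as $t \to \infty$.

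First I would fix an event of probability $1$ on which, simultaneously for every pair $(m,k)$ with $k<m$, we have $\rho(h_{<t},m,k) \to 0$ as $t \to \infty$; this is a countable intersection of the probability-$1$ events furnished by Lemma~\ref{lemexppart0}, hence still has probability $1$. On this event, fix $\varepsilon > 0$. Since $\sum_{m,k<m} g(m,k) = 1$, choose a finite set $F \subseteq \{(m,k) : k<m\}$ with $\sum_{(m,k)\notin F} g(m,k) < \varepsilon/2$. For each of the finitely many pairs $(m,k) \in F$, pointwise convergence gives a time after which $\rho(h_{<t},m,k) < \varepsilon/(2|F|)$; let $T$ be the maximum of these finitely many times. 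Then for $t \ge T$,
\[
\beta(h_{<t}) = \sum_{(m,k)\in F,\, k<t} \rho(h_{<t},m,k) + \sum_{(m,k)\notin F,\, k<t} \rho(h_{<t},m,k)
< |F| \cdot \frac{\varepsilon}{2|F|} + \sum_{(m,k)\notin F} g(m,k) < \varepsilon.
\]
Since $\varepsilon$ was arbitrary, $\beta(h_{<t}) \to 0$ on this probability-$1$ event, which is exactly the claim. (This is just the dominated convergence theorem for series, or equivalently a standard $\varepsilon$-split of a uniformly summable family; I would state it in whichever of these two forms is cleaner.)

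I do not expect any serious obstacle here: the work has already been done in Lemma~\ref{lemexppart0}, and what remains is only to upgrade termwise convergence to convergence of the sum. The one point requiring a little care is that the $k$-index is truncated by $t$ (the range is $k < \min\{m,t\}$), so the summand set grows with $t$; but since every added term is still bounded by the fixed envelope $g(m,k)$ and the tail of $g$ is uniformly small, this truncation only helps and does not interfere with the argument. The other point to note explicitly is the countable-intersection step, so that we genuinely end up with a single probability-$1$ event on which the conclusion holds.
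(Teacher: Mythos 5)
Your proof is correct and follows essentially the same argument as the paper: split $\beta(h_{<t})$ into a uniformly small tail controlled by the summable envelope $1/(m^2(m+1))$ and a finite head whose terms vanish by Lemma~\ref{lemexppart0} (the paper phrases this as a contradiction, you phrase it as a direct $\varepsilon$-split, but the content is identical). Your explicit mention of the countable-intersection step and the $k<t$ truncation is a slight gain in rigor over the paper's terser write-up.
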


Lemma \ref{lempredconv} shows that the probabilities assigned by $\xi$ converge to those of $\mu$.

\begin{restatable}{lemma}{lempredconv}\label{lempredconv}
$\forall m \in \mathbb{N}$, $h_{t:t+m-1} \in \mathcal{H}^m$, $\alpha^m : \ \bigcup_{i=0}^{m-1} \mathcal{H}^i \to \mathcal{A}$: 
\begin{equation*}
	\p^{\alpha^m}_\mu(h_{t:t+m-1}|h_{<t}) - \p^{\alpha^m}_\xi(h_{t:t+m-1}|h_{<t}) \toas^{t \to \infty} 0 \ \ \mathrm{w.p.1}
\end{equation*}
\end{restatable}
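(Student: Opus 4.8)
The plan is to squeeze the KL-divergence $\KL_{h_{<t}, m}(\p^{\alpha^m}_\mu \,\|\, \p^{\alpha^m}_\xi)$ to $0$ and then turn that into pointwise convergence of the two probabilities via Pinsker's inequality. Fix $m \in \mathbb{N}$, a string $h_{t:t+m-1} \in \mathcal{H}^m$, and an expedition $\alpha^m$. First I would observe that, by definition of the $m$-$0$ expedition, $\alpha^{\IG}_{m,0}(h_{<t})$ maximises $V^{\IG}(\cdot, h_{<t})$ over \emph{all} $m$-step expeditions, so in particular $V^{\IG}(\alpha^{\IG}_{m,0}(h_{<t}), h_{<t}) \geq V^{\IG}(\alpha^m, h_{<t})$. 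Applying Lemma \ref{lemorseau4} to the right-hand side and discarding every term of that nonnegative sum except $\nu = \mu$ gives
\begin{equation*}
V^{\IG}(\alpha^{\IG}_{m,0}(h_{<t}), h_{<t}) \;\geq\; w(\mu|h_{<t})\, \KL_{h_{<t}, m}(\p^{\alpha^m}_\mu \,\|\, \p^{\alpha^m}_\xi) .
\end{equation*}

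Next I would show the left side tends to $0$ w.p.1. Lemma \ref{lemexppart0} gives $\rho(h_{<t}, m, 0) \to 0$ w.p.1; since $\rho(h_{<t}, m, 0) = \min\{1/(m^2(m+1)),\, \eta\, V^{\IG}(\alpha^{\IG}_{m,0}(h_{<t}), h_{<t})\}$ and $1/(m^2(m+1))$ is a positive constant not depending on $t$, for all large $t$ the minimum is attained by the second argument, whence $V^{\IG}(\alpha^{\IG}_{m,0}(h_{<t}), h_{<t}) = \rho(h_{<t}, m, 0)/\eta \to 0$ w.p.1. Feeding this into the displayed inequality yields $w(\mu|h_{<t})\, \KL_{h_{<t}, m}(\p^{\alpha^m}_\mu \,\|\, \p^{\alpha^m}_\xi) \to 0$ w.p.1, and then Lemma \ref{lemwmuinf}, which says $\inf_t w(\mu|h_{<t}) > 0$ w.p.1, lets me divide through to get $\KL_{h_{<t}, m}(\p^{\alpha^m}_\mu \,\|\, \p^{\alpha^m}_\xi) \to 0$ w.p.1.

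Finally, Pinsker's inequality bounds the total variation distance between the measures $\p^{\alpha^m}_\mu(\cdot|h_{<t})$ and $\p^{\alpha^m}_\xi(\cdot|h_{<t})$ on $\mathcal{H}^m$ by a constant multiple of $\sqrt{\KL_{h_{<t}, m}(\p^{\alpha^m}_\mu \,\|\, \p^{\alpha^m}_\xi)}$, which therefore also tends to $0$ w.p.1; in particular so does the discrepancy at the single string $h_{t:t+m-1}$, which is exactly the claim for that $m$, $\alpha^m$, and $h_{t:t+m-1}$. Because there are only finitely many expeditions $\alpha^m$ and finitely many strings in $\mathcal{H}^m$ for each $m$, and only countably many $m$, the countably many probability-$1$ events intersect to a single probability-$1$ event on which the convergence holds for all three quantifiers simultaneously.

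The routine-but-load-bearing step is the very first inequality: one has to use that $\alpha^{\IG}_{m,0}(h_{<t})$ dominates an arbitrary \emph{fixed} $\alpha^m$ in information-gain value evaluated at the current history $h_{<t}$ (immediate from the $k=0$ case of the definition of $\alpha^{\IG}_{m,k}$), after which everything reduces to chaining the earlier lemmas and Pinsker. I do not expect a serious obstacle; the only place to be careful is that the ``$\rho \to 0 \Rightarrow V^{\IG} \to 0$'' implication genuinely relies on $1/(m^2(m+1))$ being bounded away from $0$ uniformly in $t$, which it is.
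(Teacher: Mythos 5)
Your proposal is correct and is essentially the paper's own argument: bound the pointwise error by $\KL_{h_{<t},m}(\p^{\alpha^m}_\mu\|\p^{\alpha^m}_\xi)$ via Pinsker/the entropy inequality, bound that KL by $V^{\IG}(\alpha^{\IG}_{m,0}(h_{<t}),h_{<t})/w(\mu|h_{<t})$ using Lemma~\ref{lemorseau4}, non-negativity, and maximality of $\alpha^{\IG}_{m,0}$, and conclude with Lemmas~\ref{lemexppart0} and~\ref{lemwmuinf}; the paper merely phrases this contrapositively (``error $\geq\varepsilon$ i.o.\ forces $\rho$ not to vanish or $\inf_k w(\mu|h_{<k})=0$''). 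One small slip: since $\mathcal{R}=[0,1]\cap\mathbb{Q}$, $\mathcal{H}$ is countably infinite and the set of $m$-step expeditions is uncountable for $m\geq 2$, so your closing finiteness/countable-intersection remark is wrong as stated — but it is also unnecessary, both because the lemma only asserts probability $1$ for each fixed $(m,\alpha^m,h_{t:t+m-1})$ and because your bound $V^{\IG}(\alpha^{\IG}_{m,0}(h_{<t}),h_{<t})/\inf_k w(\mu|h_{<k})$ is independent of $\alpha^m$ and of the string, so the uniform statement follows anyway without any counting.
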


The proof of Lemma~\ref{lempredconv} roughly follows the following argument: if all exploration probabilities go to $0$, then the informativeness of the maximally informative expeditions goes to 0, so the informativeness of all expeditions goes to 0, meaning the prediction error goes to 0.

Finally, we prove the Strong Asymptotic Optimality Theorem: $V^*_\mu(h_{<t}) - V^{\pi^\dagger}_\mu(h_{<t}) \to 0 \ \ \textrm{with }\p^{\pi^\dagger}_\mu \textrm{\!\!-prob. 1}$.

\begin{proof}[Proof of Theorem \ref{thm:sao}]
Let $\varepsilon > 0$. Since the agent has a bounded horizon, there exists an $m$ such that for all $t$, $\frac{\Gamma_{t+m}}{\Gamma_{t}} \leq \varepsilon$. Recall
\begin{equation}
    V^*_\mu(h_{<t}) = \frac{1}{\Gamma_t}\mathbb{E}^{\pi^*_{\mu}}_\mu\left[\sum_{k=t}^\infty \gamma_k r_k \biggm\vert h_{<t}\right]
\end{equation}
Using the $m$ from above, let
\begin{equation}
    V^{*\setminus m}_\mu(h_{<t}) := \frac{1}{\Gamma_t}\mathbb{E}^{\pi^*_{\mu}}_\mu\left[\sum_{k=t}^{t+m-1} \gamma_k r_k \biggm\vert h_{<t}\right]
\end{equation}
Since $r_t \in [0, 1]$,
\begin{equation}
    |V^*_\mu(h_{<t}) - V^{*\setminus m}_\mu(h_{<t})| \leq \frac{\Gamma_{t+m}}{\Gamma_{t}} \leq \varepsilon
\end{equation}

We continue from there:

\begin{align*}
&V^{*}_\mu(h_{<t}) \\
&\leq V^{*\setminus m}_\mu(h_{<t}) + \varepsilon
\\
&\equal \frac{1}{\Gamma_t} \sum_{h_{t:t+m-1} \in \mathcal{H}^{m}} \p^{\pi^*_{\mu}}_\mu(h_{t:t+m-1}|h_{<t}) \sum_{k=t}^{t+m-1} \gamma_k r_k + \varepsilon
\\
&\Scale[0.8]{(a)}
\\[-9pt]
&\hspace{-5.5mm}\lequal^{\exists T_1 \ \forall t > T_1} \frac{1}{\Gamma_t} \sum_{h_{t:t+m-1} \in \mathcal{H}^{m}} \p^{\pi^*_{\mu}}_\xi(h_{t:t+m-1}|h_{<t}) \sum_{k=t}^{t+m-1} \gamma_k r_k+ 2\varepsilon  
\\
&\lequal^{(b)} \frac{1}{\Gamma_t} \mathbb{E}^{\pi^*_{\mu}}_\xi \left[\sum_{k=t}^{\infty} \gamma_k r_k  \biggm\vert h_{<t}\right] + 2\varepsilon
\\
&\lequal^{(c)} \frac{1}{\Gamma_t} \mathbb{E}^{\pi^*_{\xi}}_\xi \left[\sum_{k=t}^{\infty} \gamma_k r_k  \biggm\vert h_{<t}\right] + 2\varepsilon
\\
&\lequal^{(d)} \frac{1}{\Gamma_t} \sum_{h_{t:t+m-1} \in \mathcal{H}^{m}} \p^{\pi^*_{\xi}}_\xi(h_{t:t+m-1}|h_{<t}) \sum_{k=t}^{t+m-1} \gamma_k r_k + 3\varepsilon  
\\
&\Scale[0.8]{(e)}
\\[-9pt]
&\hspace{-5.5mm}\lequal^{\exists T_2 \ \forall t > T_2} \frac{1}{\Gamma_t} \sum_{h_{t:t+m-1} \in \mathcal{H}^{m}} \p^{\pi^*_{\xi}}_\mu(h_{t:t+m-1}|h_{<t}) \sum_{k=t}^{t+m-1} \gamma_k r_k + 4\varepsilon
\\
&\Scale[0.8]{(f)}
\\[-9pt]
&\hspace{-5.5mm}\lequal^{\exists T_3 \ \forall t > T_3}\frac{1}{\Gamma_t} \sum_{h_{t:t+m-1} \in \mathcal{H}^{m}} \frac{\p^{\pi^\dagger}_\mu(h_{t:t+m-1}|h_{<t})}{\prod_{k=t}^{t+m-1} (1 - \beta(h_{<k}))} \sum_{k=t}^{t+m-1} \gamma_k r_k \\
&\ \hspace{7mm}+ 4\varepsilon 
\\
&\lequal \frac{1}{\Gamma_t} \sum_{h_{t:t+m-1} \in \mathcal{H}^{m}} \frac{\p^{\pi^\dagger}_\mu(h_{t:t+m-1}|h_{<t})}{(1 - \max_{t \leq k < t + m} \beta(h_{<k}))^{m}} \sum_{k=t}^{t+m-1} \gamma_k r_k \\
&\ \hspace{7mm} + 4\varepsilon 
\\
&\Scale[0.8]{(g)}
\\[-9pt]
&\hspace{-12.5mm}\lequal^{\hspace{7mm}\exists T_4, \varepsilon' > 0 \ \forall t > T_4} \frac{1}{\Gamma_t} \sum_{h_{t:t+m-1} \in \mathcal{H}^{m}} \frac{\p^{\pi^\dagger}_\mu(h_{t:t+m-1}|h_{<t})}{(1 - \varepsilon')^{m}} \sum_{k=t}^{t+m-1} \gamma_k r_k \\
&\ \hspace{7mm} + 4\varepsilon 
\\
&\lequal^{(h)} \frac{1}{(1-\varepsilon')^m \Gamma_t} \mathbb{E}^{\pi^\dagger}_\mu \left[\sum_{k=t}^{\infty} \gamma_k r_k \biggm\vert h_{<t}\right] + 4\varepsilon
\\
&= \frac{1}{(1-\varepsilon')^m} V^{\pi^\dagger}_\mu(h_{<t}) + 4\varepsilon
\\
&= V^{\pi^\dagger}_\mu(h_{<t}) + 4\varepsilon + (\frac{1}{(1-\varepsilon')^m} - 1) V^{\pi^\dagger}_\mu(h_{<t})
\\
&\lequal^{(i)} V^{\pi^\dagger}_\mu(h_{<t}) + 4\varepsilon + (\frac{1}{(1-\varepsilon')^m} - 1)
\tagaligneq
\end{align*}

$(a)$, $(e)$, $(f)$, and $(g)$ all hold with probability 1. $(a)$ follows from Lemma \ref{lempredconv}: for all $m$, $\p^\pi_\xi(\cdot | h_{<t}) \to \p^\pi_\mu(\cdot | h_{<t})$ for all conditional probabilities of histories of length $m$, with probability 1, and the countable sum is bounded (by $\Gamma_t$). $(b)$ follows from adding more non-negative terms to the sum. $(c)$ follows $\pi^*_\xi$ being the $\xi$-optimal policy, and therefore it accrues at least as much expected reward in environment $\xi$ as $\pi^*_\mu$ does. $(d)$ follows from $\sum_{k = t+m}^\infty \gamma_k / \Gamma_t = \Gamma_{t+m}/\Gamma_t \leq \varepsilon$, and $r_t \in [0, 1]$. $(e)$ follows from Lemma \ref{lempredconv} just as $(a)$ did. $(f)$ follows because the product in the denominator is the probability that $\pi^\dagger$ mimics $\pi^*_\xi$ for $m$ consecutive timesteps, and by Lemma \ref{lemexp0} there is a time after which this probability is uniformly strictly positive. $(g)$ follows from Lemma \ref{lemexp0}: $\beta(h_{<k}) \to 0$ with probability 1. $(h)$ follows from adding more non-negative terms to the sum. Finally, $(i)$ follows from the value being normalized to $[0, 1]$ by $\Gamma_t$.

$\forall \delta > 0 \ \exists \varepsilon > 0, \varepsilon' > 0: 4\varepsilon + (\frac{1}{(1 - \varepsilon')^{m}} - 1) < \delta$. Letting $T = \max\{T_1, T_2, T_3, T_4\}$, we can combine the equations above to give
\begin{equation}
    \forall \delta > 0 \ \exists T \ \forall t > T : V^{*}_\mu(h_{<t}) - V^{\pi^\dagger}_\mu(h_{<t}) < \delta \ \ \ \mathrm{w.p.1}
\end{equation}
Since $V^{*}_\mu(h_{<t}) \geq V^{\pi^\dagger}_\mu(h_{<t})$, 
\begin{equation}
    V^{*}_\mu(h_{<t}) - V^{\pi^\dagger}_\mu(h_{<t}) \to 0 \ \ \mathrm{w.p.1}
\end{equation}
\end{proof}

Strong Asymptotic Optimality is not a guarantee of efficacy; consider an agent that ``commits suicide'' on the first timestep, and thereafter receives a reward of $0$ no matter what it does. This agent is asymptotically optimal, but not very useful. In general, when considering many environments with many different ``traps,'' bounded regret is impossible to guarantee \cite{Hutter:04uaibook}, but one can still demand from a reinforcement learner that it make the best of whatever situation it finds itself in by correctly identifying (in the limit) the optimal policy.

We suspect that strong asymptotic optimality would not hold if Inq had an unbounded horizon, since its horizon of concern may grow faster than it can learn about progressively more long-term dynamics of the environment. Going more into the technical details, let $\Delta_{kt}$ be, roughly ``at time $t$, how much does $\xi$ differ from $\mu$ regarding predictions about the next $k$ timesteps?'' A lemma in our proof is that $\forall k \ \lim_{t \to \infty} \Delta_{kt} = 0$, but this does not imply, for example, that $\lim_{z \to \infty} \Delta_{zz} = 0$. If the horizon which is necessary to predict is growing over time, Inq might not be strongly asymptotically optimal.

Indeed, we tenuously suspect that it is impossible for an agent with an unbounded time horizon to be strongly asymptotically optimal in the class of all computable environments. If that is true, then the assumptions that our result relies on (namely that the true environment is computable, and the agent has a bounded horizon) are the bare minimum for strong asymptotic optimality to be possible.

Inq is not computable; in fact, no computable policy can be strongly asymptotically optimal in the class of all computable environments (Lattimore, et al. \shortcite{Hutter:11asyoptag} show this for deterministic policies, but a simple modification extends this to stochastic policies). For many smaller environment classes, however, Inq would be computable, for example if $\mathcal{M}$ is finite, and perhaps for decidable $\mathcal{M}$ in general. The central result, that inquisitiveness is an effective exploration strategy, applies to any Bayesian agent.

% Actually not sure about this:
% For decidable $\mathcal{M}$, if we replace all infinite sums and max's over infinite sets by $\varepsilon_t$-approximations thereof, with $\varepsilon_t \to 0$, then the main theorem (SAO) should still hold, now with $\pi^\dagger$ being computable.

\section{Experimental Results}

We compared Inq with other known weakly asymptotically optimal agents, Thompson sampling and BayesExp \cite{lattimore2014bayesian}, in the grid-world environment using AIXIjs \cite{aslanides2017aixijs} which has previously been used to compare asymptotically optimal agents \cite{aslanides2017universal}. We tested in $10\ \times\ 10$ grid-worlds, and  $20\ \times\ 20$ grid-worlds, both with a single dispenser with probability of dispensing reward $0.75$; that is, if the agent enters that cell, the probability of a reward of 1 is 0.75. Following the conventions of \cite{aslanides2017universal} we averaged over 50 simulations, used discount factor $\gamma = 0.99$, 600 MCTS samples, and planning horizon of 6. The planning horizon restricts $m$, and the number of MCTS samples is an input to $\rho$UCT \cite{silver2010monte}, which we use instead of expectimax. The algorithm for the approximate version of Inq is in Appendix \ref{app:inqapprox}.
% The code used for this experiment is available at \url{https://tinyurl.com/SAO-anon-code}. Opening `demo.html' allows the reader to run Inq in the browser. 
The code used for this experiment is available online at \url{https://github.com/ejcatt/aixijs}, and this version of Inq can be run in the browser at \url{https://ejcatt.github.io/aixijs/demo.html#inq}.
We found that using small values for $\eta$, specifically $\eta \le 1$ worked well. For our experiments we chose $\eta=1$.

% \begin{figure}[H]
% \includegraphics[width=6.5cm]{N10}
% \caption{$10\times 10$ Grid-worlds}
% \end{figure}

% \begin{figure}[H]
% \includegraphics[width=6.5cm]{N20}
% \caption{$20\times 20$ Grid-worlds}
% \end{figure}

In the $10\times 10$ grid-worlds Inq performed comparably to both BayesExp and Thompson sampling. However in the $20\times 20$ grid-worlds Inq performed comparably to BayesExp, and outperformed Thompson sampling. This is likely because when the Thomspon Sampling Agent samples an environment with a reward dispenser that is inaccessible within its planning horizon, the agent acts randomly rather than seeking new cells. This is contrast to Inq and BayesExp which always have an incentive to explore the frontier of cells that have not been visited. This is especially relevant in the larger grid where the Thomspon sampling agent is more likely to act as if the dispenser is deep in uncharted territory, rather than nearby. In a grid-world, good exploration is just about visiting new states, which both Inq and BayesExp successfully seek.

\begin{figure}[H]
\includegraphics[width=\linewidth]{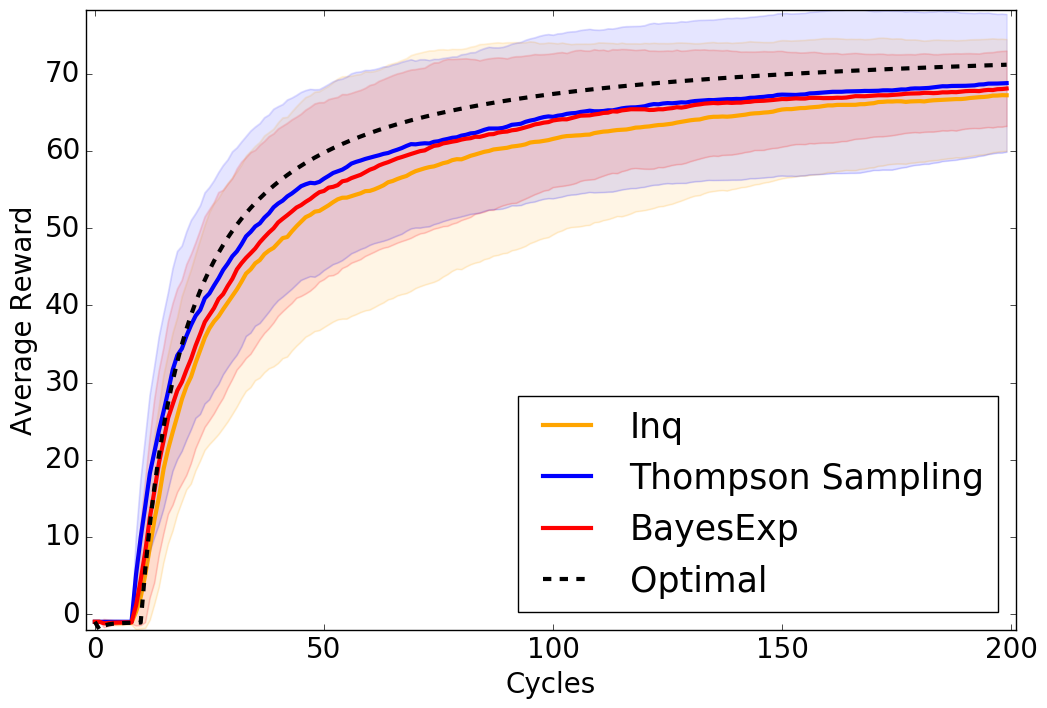}
\caption{$10\times 10$ Grid-worlds}
\end{figure}

\begin{figure}[H]
\includegraphics[width=\linewidth]{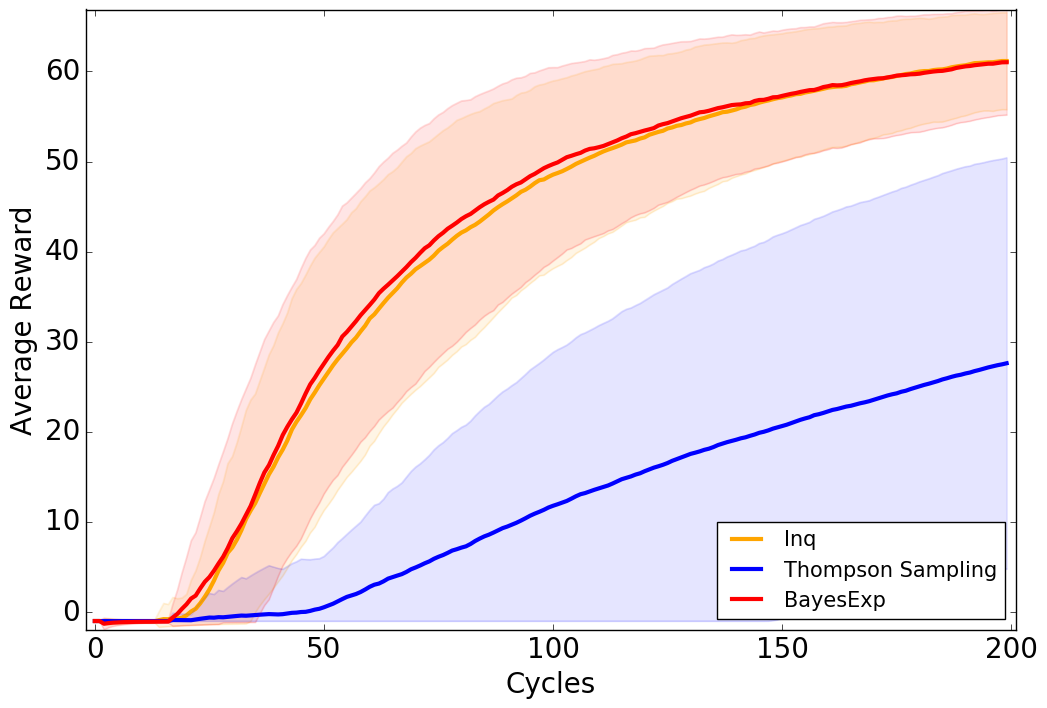}
\caption{$20\times 20$ Grid-worlds}
\end{figure}

\section{Conclusion}

We have shown that it is possible for an agent with a bounded horizon to be strongly asymptotically optimal in the class of all computable environments. No existing RL agent has as strong an optimality guarantee as Inq. The nature of the exploration regime that accomplishes this is perhaps of wider interest. We formalize an agent that gets distracted from reward maximization by its inquisitiveness: the more it expects to learn from an expedition, the more inclined it is to take it.

We have confirmed experimentally that inquisitiveness is a practical and effective exploration strategy for Bayesian agents with manageable model classes.

There are two main avenues for future work we would like to see. The first regards possible extensions of inquisitiveness: we have defined inquisitiveness for Bayesian agents with countable model-classes, but inquisitiveness could also be defined for a Bayesian agent with a continuous model class, such as a Q-learner using a Bayesian Neural Network. The second avenue regards the theory of strong asymptotic optimality itself: is Inq strongly asymptotically optimal for more farsighted discounters? If not, can it be modified to accomplish that? Or is it indeed impossible for an agent with an unbounded horizon to be strongly asymptotically optimal in the class of computable environments? Answers to these questions, besides being interesting in their own right, will likely inform the design of tractable exploration strategies, in the same way that this work has done.

\section*{Acknowledgements}
This work was supported by the Open Philanthropy Project AI Scholarship and the Australian Research Council Discovery Projects DP150104590.

% \vfill
% \pagebreak
% \phantom{.}
\vfill
\clearpage
\bibliography{cohen}

\begin{thebibliography}{}

\bibitem[\protect\citeauthoryear{Aslanides \bgroup \em et al.\egroup
  }{2017}]{aslanides2017universal}
John Aslanides, Jan Leike, and Marcus Hutter.
\newblock Universal reinforcement learning algorithms: survey and experiments.
\newblock In {\em Proceedings of the 26th International Joint Conference on
  Artificial Intelligence}, pages 1403--1410. AAAI Press, 2017.

\bibitem[\protect\citeauthoryear{Aslanides}{2017}]{aslanides2017aixijs}
John Aslanides.
\newblock {AIXI}js: A software demo for general reinforcement learning.
\newblock {\em arXiv preprint arXiv:1705.07615}, 2017.

\bibitem[\protect\citeauthoryear{Hessel \bgroup \em et al.\egroup
  }{2018}]{hessel_2018}
Matteo Hessel, Joseph Modayil, Hado van Hasselt, Tom Schaul, Georg Ostrovski,
  Will Dabney, Dan Horgan, Bilal Piot, Mohammad Azar, and David Silver.
\newblock Rainbow: Combining improvements in deep reinforcement learning.
\newblock In {\em Proc. of AAAI Conference on Artificial Intelligence}, 2018.

\bibitem[\protect\citeauthoryear{Hester and
  Stone}{2012}]{hester2012intrinsically}
Todd Hester and Peter Stone.
\newblock Intrinsically motivated model learning for a developing curious
  agent.
\newblock In {\em 2012 IEEE international conference on development and
  learning and epigenetic robotics (ICDL)}, pages 1--6. IEEE, 2012.

\bibitem[\protect\citeauthoryear{Houthooft \bgroup \em et al.\egroup
  }{2016}]{houthooft2016curiosity}
Rein Houthooft, Xi~Chen, Yan Duan, John Schulman, Filip De~Turck, and Pieter
  Abbeel.
\newblock Curiosity-driven exploration in deep reinforcement learning via
  bayesian neural networks.
\newblock {\em arXiv preprint arXiv:1605.09674}, 2016.

\bibitem[\protect\citeauthoryear{Hutter}{2005}]{Hutter:04uaibook}
Marcus Hutter.
\newblock {\em Universal Artificial Intelligence: Sequential Decisions based on
  Algorithmic Probability}.
\newblock Springer, Berlin, 2005.

\bibitem[\protect\citeauthoryear{Lattimore and
  Hutter}{2011}]{Hutter:11asyoptag}
Tor Lattimore and Marcus Hutter.
\newblock Asymptotically optimal agents.
\newblock In {\em Proc. 22nd International Conf. on Algorithmic Learning Theory
  ({ALT'11})}, volume 6925 of {\em LNAI}, pages 368--382, Espoo, Finland, 2011.
  Springer.

\bibitem[\protect\citeauthoryear{Lattimore and
  Hutter}{2014a}]{lattimore2014bayesian}
Tor Lattimore and Marcus Hutter.
\newblock Bayesian reinforcement learning with exploration.
\newblock In {\em International Conference on Algorithmic Learning Theory},
  pages 170--184. Springer, 2014.

\bibitem[\protect\citeauthoryear{Lattimore and
  Hutter}{2014b}]{Hutter:14tcdiscx}
Tor Lattimore and Marcus Hutter.
\newblock General time consistent discounting.
\newblock {\em Theoretical Computer Science}, 519:140--154, 2014.

\bibitem[\protect\citeauthoryear{Leike \bgroup \em et al.\egroup
  }{2016}]{Hutter:16thompgrl}
Jan Leike, Tor Lattimore, Laurent Orseau, and Marcus Hutter.
\newblock Thompson sampling is asymptotically optimal in general environments.
\newblock In {\em Proc. 32nd International Conf. on Uncertainty in Artificial
  Intelligence ({UAI'16})}, pages 417--426, New Jersey, USA, 2016. AUAI Press.

\bibitem[\protect\citeauthoryear{Mnih \bgroup \em et al.\egroup
  }{2015}]{mnih2015human}
Volodymyr Mnih, Koray Kavukcuoglu, David Silver, Andrei~A Rusu, Joel Veness,
  Marc~G Bellemare, Alex Graves, Martin Riedmiller, Andreas~K Fidjeland, Georg
  Ostrovski, et~al.
\newblock Human-level control through deep reinforcement learning.
\newblock {\em Nature}, 518(7540):529, 2015.

\bibitem[\protect\citeauthoryear{Mnih \bgroup \em et al.\egroup
  }{2016}]{mnih2016asynchronous}
Volodymyr Mnih, Adria~Puigdomenech Badia, Mehdi Mirza, Alex Graves, Timothy
  Lillicrap, Tim Harley, David Silver, and Koray Kavukcuoglu.
\newblock Asynchronous methods for deep reinforcement learning.
\newblock In {\em International conference on machine learning}, pages
  1928--1937, 2016.

\bibitem[\protect\citeauthoryear{Orseau \bgroup \em et al.\egroup
  }{2013}]{Hutter:13ksaprob}
Laurent Orseau, Tor Lattimore, and Marcus Hutter.
\newblock Universal knowledge-seeking agents for stochastic environments.
\newblock In {\em Proc. 24th International Conf. on Algorithmic Learning Theory
  ({ALT'13})}, volume 8139 of {\em LNAI}, pages 158--172, Singapore, 2013.
  Springer.

\bibitem[\protect\citeauthoryear{Silver and Veness}{2010}]{silver2010monte}
David Silver and Joel Veness.
\newblock Monte-carlo planning in large pomdps.
\newblock In {\em Advances in neural information processing systems}, pages
  2164--2172, 2010.

\bibitem[\protect\citeauthoryear{Silver \bgroup \em et al.\egroup
  }{2017}]{silver2017mastering}
David Silver, Thomas Hubert, Julian Schrittwieser, Ioannis Antonoglou, Matthew
  Lai, Arthur Guez, Marc Lanctot, Laurent Sifre, Dharshan Kumaran, Thore
  Graepel, et~al.
\newblock Mastering chess and shogi by self-play with a general reinforcement
  learning algorithm.
\newblock {\em arXiv preprint arXiv:1712.01815}, 2017.

\bibitem[\protect\citeauthoryear{Still}{2009}]{still2009information}
Susanne Still.
\newblock Information-theoretic approach to interactive learning.
\newblock {\em EPL (Europhysics Letters)}, 85(2):28005, 2009.

\bibitem[\protect\citeauthoryear{Whittle}{1979}]{whittle_1979}
Peter Whittle.
\newblock Discussion of {D}r {G}ittins' paper.
\newblock {\em Journal of the Royal Statistical Society}, 41:164--177, 1979.

\end{thebibliography}

% \iffalse
\onecolumn
\appendix
\appendixpage
\section{Definitions and Notation -- Quick Reference} \label{app:notation}

\begin{equation*}
\begin{split}
\mathcal{H} &:= \mathcal{A} \times \mathcal{O} \times \mathcal{R}
\\
&\hspace*{-20pt} \left.\begin{aligned}
h_{<t} &\in \mathcal{H}^{t-1}
\\
h_{t:k} &\in \mathcal{H}^{k-t+1}
\\
\mu, \nu &\in \mathcal{M}
\\
\mu, \nu &: \mathcal{H}^* \times \mathcal{A} \rightsquigarrow \mathcal{O} \times \mathcal{R}
\\
\pi &: \mathcal{H}^* \rightsquigarrow \mathcal{A}
\end{aligned}\right\rbrace \textrm{typical meaning of certain notation}
\\
\p^\pi_\nu(\epsilon) &:= 1; \ \ \ \p^\pi_\nu(haor) := \p^\pi_\nu(h) \pi(a|h) \nu(or|ha)
\\
w(\nu) &:= 2^{-K(\nu)}
\\
\p^\pi_\xi(\cdot) &:= \sum_{\nu \in \mathcal{M}} w(\nu) \p^\pi_\nu(\cdot)
\\
w(\nu|h) &:= w(\nu) \frac{\p^\pi_\nu(h)}{\p^\pi_\xi(h)}
\\
w_{\inf}(\mu|h_{1:\infty}) &:= \inf_{k \in \mathbb{N}} w(\mu|h_{<k})
\\
\xi(or|ha) &:= \sum_{\nu \in \mathcal{M}} w(\nu|h) \nu(or|ha)
\\[-3pt]
\IG(h_{t:t+k-1}|h_{<t}) &:= \sum_{\nu \in \mathcal{M}} w(\nu|h_{<t+k-1})\log \frac{w(\nu|h_{<t+k-1})}{w(\nu|h_{<t})} \ \ \ \textrm{for} \ h_{<t} \in \mathcal{H}^{t-1}, h' \in \mathcal{H}^k
\\[-3pt]
V^{\IG}(\alpha^m, h_{<t}) &:= \sum_{h_{t:t+m-1}\in \mathcal{H}^m} \p^{\alpha^m}_\xi(h_{<t+m}|h_{<t}) \IG(h_{t:t+m-1}|h_{<t})
\\[-3pt]
\alpha^{\IG}_{m, k}(h_{<t}) &:= \argmax_{\alpha^m \ : \ \bigcup_{i=0}^{m-1} \mathcal{H}^i \to \mathcal{A}} V^{\IG}(\alpha^m, h_{<t-k})
\\
a^{\IG}_{m, k}(h_{<t}) &:= \alpha^{\IG}_{m, k}(h_{<t})(h_{t-k:t-1})
\\[-3pt]
V^\pi_\nu(h_{<t}) &:= \frac{1}{\Gamma_t}\mathbb{E}^\pi_\nu[\sum_{k=t}^\infty \gamma_k r_k|h_{<t}]
\\
a^*(h_{<t}) &:= \pi^*_{\xi}(h_{<t})
\\[-3pt]
V^*_\nu(h_{<t}) &:= \sup_{\pi \in \Pi} V^\pi_\nu(h_{<t}) = V^{\pi^*_{\nu}}_\nu(h_{<t})
\\[-2pt]
\rho(h_{<t},m,k) &:= \max\{\frac{1}{m^2(m+1)}, \eta V^{\IG}(\alpha^{\IG}_{m, k}(h_{<t}), h_{<t-k})\}
\\
\pi^\dagger(a|h_{<t}) &:= \sum_{m \in \mathbb{N}} \sum_{k < m, t} \rho(h_{<t},m,k)[[a = a^{\IG}_{m, k}(h_{<t})]] + (1 - \beta(h_{<t},m,k))[[a = a^*(h_{<t})]]
\\
\KL_{h_{<t}, n}(\p^\pi_{\nu_1} || \p^\pi_{\nu_2}) &:= \sum_{h' \in \mathcal{H}^n} \p^\pi_{\nu_1}(h'|h_{<t}) \log \frac{\p^\pi_{\nu_1}(h'|h_{<t})}{\p^\pi_{\nu_2}(h'|h_{<t})}
\end{split}
\end{equation*}

\newpage

\section{Proofs of Lemmas} \label{app:proofs}

\vspace{12pt}

We begin with a lemma that equates the information gain value of an expedition with the expected prediction error. The KL-divergence on the right hand side represents how different $\nu$ and $\xi$ appear when following the expedition in question.

\lemorseau*

\begin{proof}
This result is shown in \cite[Equation 4]{Hutter:13ksaprob}.
\end{proof}

Recall that $w(\nu|h_{<t})$ is the posterior weight that Inq assigns to the environment $\nu$ after observing $h_{<t}$. We show that the infimum of this value is strictly positive with probability 1.

\lemwmuinf*

\begin{proof}
Suppose $\inf_t w(\mu|h_{<t}) = 0$. $w(\mu|h_{<t}) > 0$ for all histories generated by $\p^\pi_\mu$. Therefore, $\inf_t w(\mu|h_{<t}) = 0 \implies \liminf_{t \to \infty} w(\mu|h_{<t}) = 0$, and $\limsup_{t \to \infty} w(\mu|h_{<t})^{-1} = \infty$. We show that this has probability $0$.

Let 
\begin{equation}
    z_t := w(\mu|h_{\leq t})^{-1} = \frac{\p^\pi_\xi(h_{\leq t})}{\p^\pi_\mu(h_{\leq t})} w(\mu)^{-1}
\end{equation}
I first show that $z_t$ is a $\mu$-martingale.

\begin{align*}
\mathbb{E}^\pi_\mu[z_t | h_{<t}] &= w(\mu)^{-1} \sum_{h_t \in \mathcal{H}} \p^\pi_\mu(h_{t} | h_{<t}) \frac{\p^\pi_\xi(h_{\leq t})}{\p^\pi_\mu(h_{\leq t})}
\\
&= w(\mu)^{-1} \frac{\sum_{h_t \in \mathcal{H}} \p^\pi_\xi(h_{\leq t})}{\p^\pi_\mu(h_{<t})}
\\
&= w(\mu)^{-1} \frac{\p^\pi_\xi(h_{<t})}{\p^\pi_\mu(h_{<t})}
\\
&= z_{t-1}
\tagaligneq
\end{align*}

By the martingale convergence theorem $z_t \to f(\omega) < \infty \ \ \mathrm{w.p. 1}$, for $\omega \in \Omega$, the sample space, and some $f: \Omega \to \mathbb{R}$. Therefore, $\inf_t w(\mu|h_{<t}) > 0 \ \ \mathrm{w.p. 1}$.
\end{proof}

Next we show that every exploration probability $\rho(h_{<t}, m, k)$ goes to $0$.
From here, all ``w.p.1'' statements mean with $\p^{\pi^\dagger}_\mu$-probability 1, if not otherwise specified.

\lemexppart*

\begin{proof}
$\rho(h_{<t}, m, k) = \rho(h_{<t-k}, m, 0)$, so we need only show that $\rho(h_{<t}, m, 0) \to 0$ w.p.1. We do this by showing that the expectation of $\rho(h_{<t}, m, 0)^{m+1}$ is summable. (This is a stronger result, since it implies that it is summable with probability 1, so the probability that it is greater than $\varepsilon$ infinitely often is 0.) A bit of notational background: $0 \in \mathbb{N}$, and $m \mathbb{N} + i = \{i, i+m, i+2m, ...\}$. Each equation and inequality is explained below.

\begin{align*} \label{eqn:limexp}
    &\ \ \ \ \ w(\mu) \E^{\pi^\dagger}_\mu \sum_{t \in m\mathbb{N} + i} \rho(h_{<t}, m, 0)^{m+1}
    \\
    &\lequal^{(a)} \sum_{\nu \in \mathcal{M}} w(\nu) \E^{\pi^\dagger}_\nu \sum_{t \in m\mathbb{N} + i} \rho(h_{<t}, m, 0)^{m+1}
    \\
    &\equal^{(b)} \E^{\pi^\dagger}_\xi \sum_{t \in m\mathbb{N} + i} \rho(h_{<t}, m, 0)^{m+1}
    \\
    &\lequal^{(c)} \E^{\pi^\dagger}_\xi \sum_{t \in m\mathbb{N} + i} \rho(h_{<t}, m, 0)^m \eta V^{\IG}(\alpha^{\IG}_{m, 0}, h_{<t})
    \\
    &\equal^{(d)} \eta \sum_{t \in m\mathbb{N} + i} \E_{h_{<t} \sim \p^{\pi^\dagger}_\xi} \left[\rho(h_{<t}, m, 0)^m \E_{h_{t:t+m-1} \sim \p^{\alpha^{\IG}_{m, 0}}_\xi} \left[ \IG(h_{t:t+m-1} | h_{<t}) \right]\right]
    \\
    &\equal^{(e)} \eta \sum_{t \in m\mathbb{N} + i} \E_{h_{<t} \sim \p^{\pi^\dagger}_\xi} \left[\sum_{h_{t:t+m-1} \in \mathcal{H}^m} \rho(h_{<t}, m, 0)^m \p^{\alpha^{\IG}_{m, 0}}_\xi(h_{t:t+m-1}) \left[ \IG(h_{t:t+m-1} | h_{<t}) \right]\right]
    \\
    &\lequal^{(f)} \eta \sum_{t \in m\mathbb{N} + i} \E_{h_{<t} \sim \p^{\pi^\dagger}_\xi} \left[\sum_{h_{t:t+m-1} \in \mathcal{H}^m} \p^{\pi^\dagger}_\xi(h_{t:t+m-1}) \left[ \IG(h_{t:t+m-1} | h_{<t}) \right]\right]
    \\
    &\equal^{(g)} \eta \sum_{t \in m\mathbb{N} + i} \E^{\pi^\dagger}_\xi \IG(h_{t:t+m-1} | h_{<t})
    \\
    &\equal^{(h)} \eta \E^{\pi^\dagger}_\xi \sum_{t \in m\mathbb{N} + i} \sum_{\nu \in \mathcal{M}} w(\nu | h_{<t+m}) \log \frac{w(\nu | h_{<t+m})}{w(\nu | h_{<t})}
    \\
    &\equal^{(i)} \eta \sum_{t \in m\mathbb{N} + i} \sum_{\nu \in \mathcal{M}} \E^{\pi^\dagger}_\xi \frac{w(\nu) \nu(o\!r_{<t+m}|a_{<t+m})} {\xi(o\!r_{<t+m}|a_{<t+m})} \log \frac{w(\nu | h_{<t+m})}{w(\nu | h_{<t})}
    \\
    &\equal^{(j)} \eta \sum_{t \in m\mathbb{N} + i} \sum_{\nu \in \mathcal{M}} \E^{\pi^\dagger}_\nu w(\nu) \log \frac{w(\nu | h_{<t+m})}{w(\nu | h_{<t})}
    \\
    &\equal^{(k)} \lim_{N \to \infty} \eta \sum_{k = 0}^{N-1} \sum_{\nu \in \mathcal{M}} \E^{\pi^\dagger}_\nu w(\nu) \log \frac{w(\nu | h_{<mk+i+m})}{w(\nu | h_{<mk+i})}
    \\
    &\equal^{(l)} \lim_{N \to \infty} \eta \sum_{\nu \in \mathcal{M}} \E^{\pi^\dagger}_\nu w(\nu) \log \prod_{k = 0}^{N-1} \frac{w(\nu | h_{<m(k+1)+i})}{w(\nu | h_{<mk+i})}
    \\
    &\equal^{(m)} \lim_{N \to \infty} \eta \sum_{\nu \in \mathcal{M}} \E^{\pi^\dagger}_\nu w(\nu) \log \frac{w(\nu | h_{<mN+i})}{w(\nu | h_{<i})}
    \\
    &\lequal^{(n)} \eta \sum_{\nu \in \mathcal{M}} \E^{\pi^\dagger}_\nu w(\nu) \log \frac{1}{w(\nu | h_{<i})}
    \\
    &\equal^{(o)} \eta \sum_{\nu \in \mathcal{M}} \E^{\pi^\dagger}_\nu w(\nu) \log \frac{1}{w(\nu)} \frac{w(\nu)}{w(\nu | h_{<i})}
    \\
    &\equal^{(p)} \eta \sum_{\nu \in \mathcal{M}} w(\nu) \log \frac{1}{w(\nu)} + \eta \sum_{\nu \in \mathcal{M}} \E^{\pi^\dagger}_\nu w(\nu) \log \frac{w(\nu)}{w(\nu | h_{<i})}
    \\
    &\equal^{(q)} \eta \mathrm{Ent}(w) + \eta \sum_{h_{<i} \in \mathcal{H}^i} \sum_{\nu \in \mathcal{M}} w(\nu) \p^{\pi^\dagger}_\nu(h_{<i}) \log \frac{w(\nu)}{w(\nu | h_{<i})}
    \\
    &\equal^{(r)} \eta \mathrm{Ent}(w) + \eta \sum_{h_{<i} \in \mathcal{H}^i} \sum_{\nu \in \mathcal{M}} w(\nu | h_{<i}) \p^{\pi^\dagger}_\xi(h_{<i}) \log \frac{w(\nu)}{w(\nu | h_{<i})}
    \\
    &\equal^{(s)} \eta \mathrm{Ent}(w) - \eta \E^{\pi^\dagger}_\xi [\IG(h_{<i}|\epsilon)] \lequal^{(t)} \eta \mathrm{Ent}(w) \lthan^{(u)} \infty
    \tagaligneq
\end{align*}
For multiple steps in this derivation, note that the information gain is non-negative; this is a property of the KL-divergence. (a) follows from the l.h.s. being one of the non-negative summands of the r.h.s. (b) follows from the definition of $\xi$. (c) follows from the definition of $\rho$. (d) substitutes $V^{\IG}$ for its definition. (e) expands the definition of the expectation. (f) follows because $\pi^\dagger$ mimics $\alpha^{\IG}_{m, 0}$ for $m$ consecutive timesteps with probability $\prod_{i = 0}^{m-1} \rho(h_{<t+i}, m, i) = \rho(h_{<t}, m, 0)^m$, so the probability of any history under $\p^{\pi^\dagger}_\xi$ is at least the probability of that history under $\p^{\alpha^{\IG}_{m, 0}}_\xi$ times $\rho(h_{<t}, m, 0)^m$. (g) combines the two expectations, which are now with respect to the same probability measure. (h) expands the definition of the information gain. (i) rearranges the expectations and the sums, and expands $w(\nu | h_{<t+m})$ according to Bayes' rule. (j) converts the expectation to a expectation with respect to a different probability measure through simple cancellation. (k) implements a change of variable from $t$ to $mk + i$. (l) moves a sum inside the logarithm. (m) cancels out all terms expect the numerator of the last term and the denominator of the first. (n) follows from all posterior weights being $\leq 1$. (o) and (p) are obvious. (q) applies the definition of the entropy of a distribution $\mathrm{Ent}(\cdot)$, and expands the expectation. (r) changes the variable in the expectation; this is the reverse of (i) and (j). (s) applies the definition of the information gain (after inverting the fraction in the logarithm). (t) follows from the non-negativity of the information gain. And (u) is shown in \cite[Proposition 13]{Hutter:13ksaprob}.

Finally, 
\begin{align*} \label{eqn:removemod}
    \E^{\pi^\dagger}_\mu \sum_{t = 0}^\infty \rho(h_{<t}, m, 0)^{m+1} &= \sum_{i = 0}^{m-1} \E^{\pi^\dagger}_\mu \sum_{t \in m\mathbb{N} + i} \rho(h_{<t}, m, 0)^{m+1}
    \\
    &\lequal^{(\ref{eqn:limexp})} \sum_{i = 0}^{m-1} \frac{\eta \mathrm{Ent}(w)}{w(\mu)} = \frac{m \eta \mathrm{Ent}(w)}{w(\mu)} < \infty
    \tagaligneq
\end{align*}
\end{proof}

Now, we show that the total exploration probability goes to 0:
\lemexp*

\begin{proof}
\begin{align*}
    \beta(h_{<t}) &= \sum_{m \in \mathbb{N}} \sum_{k = 0}^{\min\{m-1, t\}} \rho(h_{<t}, m, k)
    \\
    &= \sum_{m \in \mathbb{N}} \sum_{k = 0}^{\min\{m-1, t\}} \min \left\{\frac{1}{m^2(m+1)},V^{\IG}_{m, k}(h_{<t})\right\}
    \tagaligneq
\end{align*}

Each of the terms in the sum approaches $0$ with probability 1 by Lemma \ref{lemexppart0}, and because $\rho(h_{<t}, m, k) = \rho(h_{<t - k}, m, 0)$. Suppose by contradiction $\beta(h_{<t}) > \varepsilon > 0$ infinitely often. There exists an $M$ such that
\begin{equation}
    \sum_{m = M}^\infty \sum_{k = 0}^{\min\{m-1, t\}} \rho(h_{<t}, m, k) < \sum_{m = M}^\infty \sum_{k = 0}^{m-1} \frac{1}{m^2(m+1)} < \varepsilon/2
\end{equation}
for all $t$. With that $M$, then if $\beta(h_{<t}) > \varepsilon$ infinitely often, it must be the case that $\sum_{m = 0}^{M-1} \sum_{k = 0}^{m-1} \rho(h_{<t}, m, k) > \varepsilon/2$ infinitely often, but this is a finite sum of terms that all approach $0$, a contradiction.
\end{proof}

Lemma \ref{lempredconv} shows that the probabilities assigned by $\xi$ converge to those of $\mu$.

\lempredconv*

\begin{proof}
Suppose that $0 < \varepsilon \leq (\p^{\alpha^m}_\mu(h_{t:t+m-1}|h_{<t}) - \p^{\alpha^m}_\xi(h_{t:t+m-1}|h_{<t}))^2$ for some $h_{t:t+m-1}$.
\begin{align*} \label{eqn:useentropyineq}
    \varepsilon &\leq (\p^{\alpha^m}_\mu(h_{t:t+m-1}|h_{<t}) - \p^{\alpha^m}_\xi(h_{t:t+m-1}|h_{<t}))^2
    \\
    &\lequal^{(a)} \KL_{h_{<t}, m}\left(\p^{\alpha^m}_\mu \vb \vb \p^{\alpha^m}_\xi \right)
    \\
    &\lequal^{(b)} \sum_{\nu \in \mathcal{M}} \frac{w(\nu | h_{<t})}{w(\mu | h_{<t})} \KL_{h_{<t}, m}\left(\p^{\alpha^m}_\nu \vb \vb \p^{\alpha^m}_\xi \right)
    \\
    &\equal^{(c)} \frac{1}{w(\mu | h_{<t})} V^{\IG}(\alpha^m, h_{<t})
    \\
    &\lequal^{(d)} \frac{1}{\inf_k w(\mu | h_{<k})} V^{\IG}(\alpha^m, h_{<t})
    \\
    &\lequal^{(e)} \frac{1}{\inf_k w(\mu | h_{<k})} V^{\IG}(\alpha^{\IG}_{m, 0}(h_{<t}), h_{<t})
    \tagaligneq
\end{align*}
(a) is a result from information theory known as the entropy inequality. (b) follows from the non-negativity of the KL-divergence, and the l.h.s. being one of the summands of the r.h.s. (c) follows from Lemma~\ref{lemorseau4}. (d) follows from the definition of the infimum. And (e) follows from the fact that $\alpha^{\IG}_{m, 0}(h_{<t})$ maximizes $V^{\IG}(\cdot, h_{<t})$, by definition.

Therefore,
\begin{align*}
    &(\p^{\alpha^m}_\mu(h_{t:t+m-1}|h_{<t}) - \p^{\alpha^m}_\xi(h_{t:t+m-1}|h_{<t}))^2 \geq \varepsilon \ \ \mathrm{i.o.}
    \\
    \textrm{implies} \ \ &V^{\IG}(\alpha^{\IG}_{m, 0}(h_{<t}), h_{<t}) \geq \varepsilon \inf_k w(\mu | h_{<k}) \ \ \mathrm{i.o.}
    \\
    \textrm{which implies} \ \ &\rho(h_{<t}, m, 0) \geq \min\{\frac{1}{m^2(m+1)}, \varepsilon \inf_k w(\mu | h_{<k})\} \ \ \mathrm{i.o.}
    \\
    \textrm{which implies} \ \ &\sum_{t = 0}^\infty \rho(h_{<t}, m, 0)^{m+1} = \infty \ \ \textrm{or} \ \ \inf_k w(\mu | h_{<k}) = 0\ 
\end{align*}

This has probability 0 by Lemmas \ref{lemexppart0} and \ref{lemwmuinf}. Thus, with probability 1, $\p^{\alpha^m}_\mu(h_{t:t+m-1}|h_{<t}) - \p^{\alpha^m}_\xi(h_{t:t+m-1}|h_{<t}) \to 0$.

\end{proof}

\section{Approximation of Inq} \label{app:inqapprox}
Following Aslanides \shortcite{aslanides2017aixijs}, our approximation of Inq calls $\rho$UCT \cite{silver2010monte} as a subroutine in place of expectimax.

\begin{algorithm}
\caption{Approximation of Inquisitive Reinforcement Learner's Policy}
\begin{algorithmic}[1]
\Require MCTS Samples, horizon, $\gamma$
\renewcommand{\algorithmicrequire}{\textbf{Initialize:}}
\Require uniform prior over model class
\While{True}

\ForAll {$m \leq$ horizon and $k < \min\{m, t\}$}
\State using information gain as reward, $a^{\IG}_{m, k} \sim \rho\textrm{UCT}(h_{<t-k}, \textrm{MCTS samples}, m, \gamma)$
\State {$\rho (m,k)$ = $\min\{$information-gain-value of $a^{\IG}_{m, k}$, $1/(m^2(m+1))\}$}
\EndFor
\State using the actual reward, $a^* \sim \rho\textrm{UCT}(h_{<t}, \textrm{MCTS samples}, \textrm{horizon}, \gamma)$
\State take action $a^{\IG}_{m, k}$ with probability $\rho (m,k)$ for all $m \leq$ horizon and $k < \min\{m, t\}$ else take action $a^*$
\State update posterior from observation and reward

\EndWhile
\end{algorithmic}
\end{algorithm}

\end{document}